\documentclass[twoside,11pt]{article}
\usepackage{jair, rawfonts}
\usepackage{apalike}

\usepackage{booktabs}       
\usepackage{amsfonts}       
\usepackage{nicefrac}       
\usepackage{microtype}      
\usepackage{xcolor}         
\usepackage{enumitem}
\usepackage{breakcites}

\usepackage{mathtools}
\usepackage{amsmath}
\usepackage{amsthm}
\usepackage{amsfonts}
\usepackage{amssymb}

\usepackage{etoolbox}
\usepackage{tcolorbox}
\newtcolorbox{axiombox}[2][]{colback=red!5!white,colframe=red!60!black,boxsep=2pt,left=6pt,right=6pt,top=4pt,bottom=4pt,title={\textbf{Axiom:} #2},#1}

\newtheorem{theorem}{Theorem}
\newtheorem{lemma}{Lemma}[section]
\newtheorem{definition}[lemma]{Definition}
\newtheorem{corollary}[lemma]{Corollary}
\newtheorem{remark}[lemma]{Remark}
\newtheorem{example}[lemma]{Example}
\newtheorem*{theorem*}{Theorem}
\newtheorem*{corollary*}{Corollary}

\usepackage{xcolor} 
\newcommand{\changed}[1]{{#1}}

\ShortHeadings{Representative Social Choice: From Learning Theory to AI Alignment}
{Qiu}
\firstpageno{1}

\begin{document}

\title{Representative Social Choice:\\From Learning Theory to AI Alignment}

\author{\name Tianyi Qiu \email qiutianyi.qty@gmail.com \\[4pt]
\addr Peking University \\
\addr No.5 Yiheyuan Rd, Beijing 100871 \\[4pt]
\addr Center for Human-Compatible AI, UC Berkeley \\
\addr 2121 Berkeley Way, CA 94720}


\maketitle

\begin{abstract}
Social choice theory is the study of preference aggregation across a population, used both in mechanism design for human agents and in the democratic alignment of language models.
In this study, we propose the \emph{representative social choice} framework for the modeling of democratic representation in collective decisions, where the number of issues and individuals are too large for mechanisms to consider all preferences directly. These scenarios are widespread in real-world decision-making processes, such as jury trials, legislation, corporate governance, and, more recently, language model alignment.
In representative social choice, the population is \emph{represented} by a finite sample of individual-issue pairs based on which social choice decisions are made. We show that many of the deepest questions in representative social choice can be formulated as statistical learning problems, and prove the generalization properties of social choice mechanisms using the theory of machine learning. We further formulate axioms for representative social choice, and prove Arrow-like impossibility theorems with new combinatorial tools of analysis. Our framework introduces the representative approach to social choice, opening up research directions at the intersection of social choice, learning theory, and AI alignment.
\end{abstract}

\section{Introduction}

Social choice theory is a field of study that deals with the aggregation of individual preferences to form a collective decision. It has been applied in domains such as economics \cite{feldman2006welfare}, political science \cite{miller1983pluralism,coleman1986democracy}, and computer science \cite{conitzerposition}, to name a few. In these applications, the goal is to design mechanisms that aggregate individual preferences in a way that satisfies certain desirable properties, most especially fairness.

However, existing models in social choice theory tend to be simplistic and rely on relatively strong assumptions, including (1) \emph{independent, single-issue choices}, and (2) \emph{complete information on all preferences of all individuals}. In practice, these assumptions are often violated. In common large-scale elections, candidates have partisan policies that are correlated across a huge number of different issues, and it is also infeasible to collect preferences of all voters on all issues, due to the large number of issues and voters involved.

These problems are not merely practical details that can be ignored. They are fundamental to the theory of social choice itself, since these complexities are exactly what give rise to democratic \emph{representation} --- the idea that individuals can delegate their decision-making power to a small number of representatives who can make decisions on their behalf, when there are too many issues and too many individuals to consider all preferences directly. The introduction of representation leads to fundamental questions for social choice theory that are not well-understood, such as the problem of \emph{generalization} --- how can we ensure that the decisions made by the representatives are representative of the population's preferences, when the representatives are only chosen based on a small number of individuals' opinions on a small number of issues?

In this paper, we propose a new framework for social choice theory that models these complexities, which we call \emph{representative social choice}. As we will show in the following sections, many of the deepest questions in representative social choice can be formulated as statistical learning problems, despite the seemingly different natures of the two fields. This connection allows us to leverage the rich theory of statistical machine learning to formulate axioms and mechanisms for representative social choice, and to analyze their properties. \changed{While some work studies winner determination for a single election from voter samples \cite{bhattacharyya2021predicting}, our framework addresses the aggregation of preferences over a profile of many issues, making generalization over both individuals and issues a central challenge.}

\paragraph{Applications} The representative model can be applied in the modeling of all \emph{collective decisions involving representation and delegation}, which include most real-world decision-making processes, such as:
\vspace{-0.5em}
\begin{itemize}[leftmargin=2em]
  \setlength\itemsep{0.1em}
	\item \emph{Jury Trials}: Citizens delegate their decision-making power to a randomly selected jury.
	\item \emph{Legislation Processes}: Legislature body members, which can be viewed as representative samples of the citizen population, delegates the population's legal decicions to a set of laws that they decide on. We have a multi-issue setting, since the laws cover a wide range of issues. The space of possible collective preference profiles is the space of profiles implementable by laws.
	\item \emph{Corporate Governance}: Shareholders elect a board of directors to make decisions on their behalf. We have a multi-issue setting, since the board decides on a wide range of issues during its service.
	\item \emph{AI Alignment}: Frontier AI systems, including large language models (LLMs), undergo the \emph{alignment} process during training, where they are trained to make decisions that are aligned with human values \cite{bai2022training,bai2022constitutional}, using preference datasets sampled from humans. Alignment can be viewed as a form of representative social choice, where the LLM is trained to make decisions that are representative of the human population's preferences, the latter represented by individual-issue pairs sampled from human evaluators (\emph{i.e.}, the preference dataset). The space of possible collective preference profiles is the space of profiles that can be actualized as LLM policies --- a \emph{feature space} that has been the subject of much research in the field of statistical learning theory \cite{vapnik1999overview}.
\end{itemize}
\vspace{-0.5em}

\section{Related Work}\label{sec:related_work}

Social choice theory has had a long history \cite{satterthwaite1975strategy,young1975social,nisan1999algorithmic}, with more recent research studying its intersection with machine learning \cite{fish2023generative,parkes2013dynamic}, and its applications in AI alignment \cite{conitzerposition,kopf2024openassistant,klingefjord2024human,huang2024collective,prasad2018social,mishra2023ai,ge2024axioms}. This paper contributes to all three fronts, by extending the study of social choice theory to include democratic representation, using tools from the theory of machine learning, and with important applications in both human society and AI alignment.

\paragraph{Social Choice Theory} Social choice theory studies the aggregation of individual preferences to form a collective decision. The founding work of the field \cite{arrow2012social} proved the famous \emph{Arrow's impossibility theorem}, stating that no social choice mechanism can satisfy the axioms of \emph{unrestricted domain}, \emph{non-dictatorship}, \emph{Pareto efficiency}, and \emph{independence of irrelevant alternatives}. Many extensions of Arrow's theorem have been proposed, including the Gibbard-Satterthwaite theorem \cite{gibbard1973manipulation,satterthwaite1975strategy} which introduces strategy-proofness to the analysis. Beyond impossibility results, social choice theory also studies concrete mechanisms for preference aggregation, such as \emph{voting rules} \cite{taylor2005social}, \emph{scoring rules} \cite{young1975social}, and \emph{judgment aggregation} \cite{list2012theory}, while featuring intersections with other fields such as mechanism design \cite{nisan1999algorithmic} and computer science, the latter resulting in the study of computational social choice \cite{brandt2016handbook}.
Here, we extend the study of social choice to the representative setting, where issues and individuals are too numerous for all preferences to be considered directly. By doing so, \changed{we formalize the problem of democratic representation within a statistical learning framework, providing a learning-theoretic lens for social choice.}

\paragraph{Machine Learning Methods in Social Choice Theory} Machine learning methods have been applied to social choice theory to address limitations of over-simplification. For instance, \emph{generative social choice} studies the problem of handling open-ended outcomes in social choice theory in theoretically sound ways \cite{fish2023generative}, and \emph{dynamic social choice} studies the problem of handling evolving preferences in social choice theory, using theoretical models from reinforcement learning \cite{parkes2013dynamic}. In this paper, we apply the theory of statistical learning to representative social choice, showing that the deepest questions in representative social choice can be formulated as learning problems.

\paragraph{Applications of Social Choice in AI Alignment} Social choice theory has been applied to the study of AI alignment, where the goal is to design AI systems that make decisions that are aligned with human values. Current approaches to AI alignment involves the aggregation of human preferences, and social choice-based algorithms \cite{kopf2024openassistant,klingefjord2024human}, experimental studies \cite{huang2024collective}, conceptual frameworks \cite{prasad2018social,mishra2023ai,conitzerposition,zhi2024beyond}, and axiomatic frameworks \cite{ge2024axioms} have been proposed to address the problem. In this paper, we extend social choice theory to the representative setting, which can then be applied to the AI alignment setting, where human preferences are represented by a collection of individual-issue pairs sampled from human evaluators, and the AI system is trained to make decisions representative of the human population's preferences.

\section{Problem Settings of Representative Social Choice}

In this section, we present the formal definitions of the representative social choice problem, generalizing the standard social choice formalism to include more realistic complexities.

\paragraph{Issues}  We consider a discrete (but possibly infinite) set of $N$-ary issues $\mathcal{I}$, where each issue $i \in \mathcal{I}$ (\emph{e.g.}, in a given state or province, which construction project to launch this year?) comes with $N$ outcomes $[N]=\{1,2,\cdots,N\}$. Each individual's preference profile can therefore be represented as a mapping from $\mathcal{I}$ to $\mathrm{LO}(N)$, where $\mathrm{LO}(N)$ is the set of all \changed{linear orders} over the $N$ outcomes.

We define a \emph{saliency distribution} $\mathcal{D}_{\mathcal{I}}$ with full support over $\mathcal{I}$, which represents the importance of different issues, and decides the probability of each issue being sampled in the representation process. If there are a finite number of equally important issues, then $\mathcal{D}_{\mathcal{I}}$ is the uniform distribution over $\mathcal{I}$. \changed{In practice, $\mathcal{D}_{\mathcal{I}}$ could be estimated from public opinion polls, or in AI alignment, from the empirical distribution of user prompts.}

\paragraph{Population} We consider a possibly infinite population, represented by a distribution $\mathcal{D}_{\mathcal P}\in\Delta[\mathcal P]$ supported on $\mathcal P \subseteq \mathrm{LO}(N)^{\mathcal{I}}$.\footnote{\changed{We focus on linear orders as is standard in classical social choice \cite{arrow2012social}. This framework could be extended to partial orders or cardinal utilities, but we leave this for future work. }$\ldots$ where $\Delta[\mathcal P]$ is the space of distributions over the support set $\mathcal P$. We denote with $A^B$ the space of mappings from $B$ to $A$.} For any preference profile $C\in\mathcal P$, denote with $\mathcal{D}_{\mathcal P}(C)$ the probability (mass or density) that a random individual in the population has preference profile $C$ over the outcomes of all issues.

Often, we only need to consider the marginal distribution of $\mathcal{D}_{\mathcal P}$ --- the mapping $\mathcal{M}:\mathcal I\rightarrow \Delta[\mathrm{LO}(N)]$. 
For any issue $i \in \mathcal{I}$, $\mathcal{M}(i)$ is the distribution of preferences over the $N$ outcomes of issue $i$ in the population. For preference ordering $o \in \mathrm{LO}(N)$, we denote with $\mathcal{M}(i)_o$ the probability that a random individual in the population has preference ordering $o$ over outcomes of issue $i$.

\paragraph{Outcomes} The result of a decision-making process is a preference profile $C:\mathcal{I}\rightarrow \mathrm{LO}(N)$, which represents the aggregated preference of the population generated by some mechanism. 

The mechanism is \emph{representational} if the decision is made based on a finite collection of individual-issue pairs $\mathcal{S} = \{(o_1, i_1), (o_2, i_2), \ldots, (o_{|S|}, i_{|S|})\}$, with $i_k \in \mathcal{I}$ sampled from ${\mathcal D}_{\mathcal I}$, and $o_k \sim \mathcal{M}(i_k)$ is an individual's preference over the outcomes of issue $i_k$, sampled from the population distribution.

However, not all preference profiles are allowed. As a key feature of representative social choice, the mechanism is only allowed to output preference profiles from a limited \emph{candidate space} $\mathcal{C} \subseteq {\mathrm{LO}(N)}^{\mathcal{I}}$ (\emph{e.g.}, mutually compatible combinations of per-state construction projects in the national policy case, or language model policies in the AI alignment case). A mechanism is thus a function $f:{(\mathrm{LO}(N)\times \mathcal I)}^*\rightarrow \mathcal C$, \changed{where $A^*$ denotes the set of all finite sequences of elements from $A$,} that maps the sample collection $\mathcal{S}$ to a preference profile $C\in \mathcal C$.

\changed{\begin{remark}[On Fixed N]
For simplicity, we assume a fixed $N$ for all issues. This can model varying $N_i$ by setting $N = \max(N_i)$ and treating unused outcomes as `dummy' alternatives that are always ranked last.
\end{remark}}

\changed{\begin{remark}[AI Alignment as Representative Social Choice]
	Representative social choice can model the problem of AI alignment, where the LLM is trained to make decisions representative of the human population's preferences.
	Here, the set of \emph{issues} $\mathcal{I}$ is the space of all possible user prompts, and the $N$ \emph{outcomes} are the candidate responses. A \emph{candidate profile} $C \in \mathcal{C}$ is a specific AI policy (\emph{e.g.}, a set of model weights). The \emph{candidate space} $\mathcal{C}$ is the set of all policies achievable via the alignment process (\emph{e.g.}, all models reachable by RLHF training). The \emph{sample} $\mathcal{S}$ is the human preference dataset. Our framework thus analyzes how well a policy trained on $\mathcal{S}$ generalizes to the true human population's preferences over all possible prompts.
\end{remark}}

\paragraph{Binary Setting as Special Case} In the binary ($N=2$) case of representative social choice, each issue is binary (Yes/No), such as in the preference annotations of language model alignment \cite{bai2022training}. This reduction in complexity will allow for the analysis of a well-defined majority vote mechanism.

\section{Summary of Key Results}

Here, we summarize the key results from our analysis of representative social choice.

\changed{\paragraph{Generalization Bounds (Section \ref{sec:binary_generalization_bound_appendix})}
Generalization is essential in representative social choice. It ensures that decisions made by the mechanism based on a finite sample of preferences are representative of the population. We show that for binary issues, the required sample size $|\mathcal{S}|$ to ensure decisions generalize to the whole population scales with the complexity of the candidate space $\mathcal{C}$, a measure of its complexity (Theorem \ref{theorem:bin_gen_bound_appendix}). Theorem \ref{theorem:gen_bound_scoring_appendix} generalizes this bound to non-binary settings.}

In essence, the bounds tell us that the larger the sample size, the better the mechanism's decision reflects the population's true preferences, with the sample size needing to grow in proportion to the complexity of the candidate space, as measured by the VC dimension \cite{vapnik1999overview} or the Rademacher complexity \cite{mohri2008rademacher} --- as an analogy, when election candidates tailor their messaging in a fine-grained manner, the population needs to watch more debates to find broadly aligned candidates, or else candidates could easily \emph{overfit} their message to a few flagship issues.

\changed{\paragraph{Majority Vote and Scoring Mechanisms (Section \ref{sec:majority_vote}, \ref{sec:scoring_rules})}
In the binary setting, this generalization analysis leads to our first result: the simple \emph{majority vote} mechanism is approximately optimal for the whole population, as long as enough samples are drawn (Corollary \ref{cor:majority_vote_appendix}). We extend this to non-binary issues using \emph{scoring mechanisms}, which assign scores to preference profiles. We show these mechanisms also generalize well, with sample requirements depending on the complexity of the issues and outcomes (Corollary \ref{cor:scoring_mechanism_appendix}).}


\changed{\paragraph{Representative Impossibilities (Section \ref{sec:weak_impossibility_theorem}, \ref{sec:privileged_orderings}, \ref{sec:strong_impossibility_theorem})}
Finally, we show that, like its classical counterpart, representative social choice faces fundamental impossibilities. We introduce a weak impossibility theorem (Theorem \ref{theorem:weak_rep_impossibility_appendix}) that directly extends Arrow's theorem. We then strengthen it by introducing \emph{privilege graphs} to analyze issue interdependence, proving a strong impossibility theorem (Theorem \ref{theorem:strong_rep_impossibility_appendix}) that identifies the precise structural conditions---cycles in the privilege graph---that lead to impossibility. It both explains why Arrow's result holds and extends it to the general, constrained-candidate-space setting.}

These theorems show that, in representative social choice, we must make trade-offs between different desirable properties, such as fairness, utility maximization, and convergence. The two theorems differ in their range of applicability, where the latter allows interdependence between issues by lifting the constraint on $\mathcal C$.


\vspace{1em}

We consider our contribution in Section \ref{sec:binary_representative}, \ref{sec:scoring_rules}, \ref{sec:weak_impossibility_theorem} to be primarily conceptual and stage-setting, establishing the representative framework with techniques well-known in other fields. Section \ref{sec:privileged_orderings} and \ref{sec:strong_impossibility_theorem}, when establishing the conceptually important strong impossibility theorem, additionally introduce new combinatorial tools of analysis, \emph{privileged orderings} and the \emph{privilege graph}, which we believe to be of independent interest.

\section{Binary Representative Social Choice}\label{sec:binary_representative}

In this section, we consider the case of representative social choice where the issues are binary. 
A real-world example is the case of jury trials, where a randomly selected jury makes binary decisions (guilt or innocence) on behalf of the entire population. AI alignment, with its binary preference annotations, is another example.

\subsection{Binary Generalization Bound}\label{sec:binary_generalization_bound_appendix}

When aggregating preferences, the mechanism only sees a finite collection of individual-issue pairs, and therefore the decision is made based on a that finite sample. This raises the question of \emph{generalization} --- how can we ensure that the decision made by the mechanism is representative of the population's preferences, when the mechanism is only optimized for a small number of individual opinions on individual issues?

To answer this question, we can leverage the theory of statistical learning, which studies the generalization properties of learning/optimization algorithms based on finite samples. The reliability of generalization depends on the complexity of the candidate space $\mathcal{C}$ --- the more flexibile the candidate space, the more likely a profile can be picked by the mechanism that specifically fits the sample (\emph{overfitting}) rather than the broader population. To characterize complexity, we use \emph{Vapnik-Chervonenkis (VC) dimension} (Definition \ref{def:vc_dimension_appendix}) to measures the capacity of a hypothesis space to fit arbitrary finite samples \cite{vapnik1999overview}.

We can now introduce the sample complexity theorem. For any preference profile $C$ as the aggregation outcome, the theorem gives an upper bound on the difference between the \emph{sample utility} $\frac 1{|\mathcal S|} \sum_k \mathbf{1}_{C(i_k)={o_k}}$ (the goodness of the aggregated profile, evaluated on the selected samples) and the \emph{population utility} $\mathrm{E}_{i\sim {\mathcal D}_{\mathcal I}}[\mathcal M(i)_{C(i)}]$ (the unknown utility of the aggregated profile for the entire population), as a function of the sample size $|\mathcal S|$ and the VC dimension of the candidate space $\mathcal C$. Such a difference is called the \emph{generalization error}.

\begin{theorem}[Binary Generalization Bound]\label{theorem:bin_gen_bound_appendix}
	Let $\mathcal{C}$ be a candidate space with VC dimension $\mathrm{VC}(\mathcal C)$, and let $\epsilon > 0$ be a desired generalization error. Then, for any $\delta > 0$, with probability at least $1-\delta$, the sample utility and population utility of \emph{any} preference profile $C \in \mathcal C$ are $\epsilon$-close, \emph{i.e.},
	\begin{equation}
		\mathrm{Pr}\left[
		\left|\frac 1{|\mathcal S|} \sum_{k=1}^{|\mathcal S|} \mathbf{1}_{C(i_k)={o_k}} - \mathrm{E}_{i\sim {\mathcal D}_{\mathcal I}}[\mathcal M(i)_{C(i)}]\right| \leq \epsilon,
		\quad \forall C \in \mathcal C
		\right]\geq 1-\delta
	\end{equation}
	as long as we have the following, for some constant $c > 0$:
	\begin{equation}
		|\mathcal S| \geq \frac{c}{\epsilon^2} \mathrm{VC}(\mathcal C)\left(\log\mathrm{VC}(\mathcal C) + \log\frac 1\epsilon + \log\frac{1}{\delta}\right)
	\end{equation}
\end{theorem}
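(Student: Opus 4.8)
The plan is to recognize this as a standard Vapnik--Chervonenkis uniform convergence statement, once the right hypothesis class is identified, and to reduce $\mathrm{VC}(\cdot)$ of that class back to $\mathrm{VC}(\mathcal C)$. Since $N=2$, identify $\mathrm{LO}(2)$ with $\{0,1\}$, so a profile $C\in\mathcal C$ is a function $C:\mathcal I\to\{0,1\}$ and $\mathrm{VC}(\mathcal C)$ is its VC dimension as a class of $\{0,1\}$-valued functions on $\mathcal I$ (Definition~\ref{def:vc_dimension_appendix}). For each $C\in\mathcal C$ define the induced predicate $h_C:\{0,1\}\times\mathcal I\to\{0,1\}$ by $h_C(o,i)=\mathbf{1}_{C(i)=o}$, and set $\mathcal H=\{h_C:C\in\mathcal C\}$. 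The first step is bookkeeping: the draws $z_k=(o_k,i_k)$, obtained by $i_k\sim\mathcal D_{\mathcal I}$ and then $o_k\sim\mathcal M(i_k)$, are i.i.d.\ from a single distribution $\mathcal Q$ on $\{0,1\}\times\mathcal I$; the sample utility of $C$ is exactly the empirical mean $\tfrac1{|\mathcal S|}\sum_k h_C(z_k)$, and the population utility is $\mathrm{E}_{(o,i)\sim\mathcal Q}[h_C(o,i)]=\mathrm{E}_{i\sim\mathcal D_{\mathcal I}}[\mathcal M(i)_{C(i)}]$. So the claim is precisely a two-sided uniform deviation bound for $\mathcal H$.

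The second step, which I expect to be the crux, is the estimate $\mathrm{VC}(\mathcal H)\le\mathrm{VC}(\mathcal C)$. The key observation is that for a fixed outcome $o$, $h_C(o,i)$ equals $C(i)$ when $o=1$ and $1-C(i)$ when $o=0$, i.e.\ an affine (XOR-with-a-constant) function of $C(i)$. Suppose $\mathcal H$ shatters $\{(o_1,i_1),\dots,(o_d,i_d)\}$. If $i_j=i_{j'}$ with $j\ne j'$, then necessarily $o_j\ne o_{j'}$ (only two outcomes), whence $h_C(o_j,i_j)=1-h_C(o_{j'},i_{j'})$ for every $C$, which is incompatible with shattering; so the issues $i_1,\dots,i_d$ are distinct. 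Letting $\varepsilon_j=1$ if $o_j=1$ and $\varepsilon_j=0$ otherwise, the label vector $(h_C(o_1,i_1),\dots,h_C(o_d,i_d))$ is the coordinatewise XOR of $(C(i_1),\dots,C(i_d))$ with the fixed vector $(\varepsilon_1,\dots,\varepsilon_d)$; since XOR by a fixed vector is a bijection of $\{0,1\}^d$, $\mathcal H$ realizing all $2^d$ patterns forces $\mathcal C$ to realize all $2^d$ patterns on $\{i_1,\dots,i_d\}$, i.e.\ $\mathcal C$ shatters $\{i_1,\dots,i_d\}$. Hence $d\le\mathrm{VC}(\mathcal C)$.

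The third step is a black-box application of the classical VC uniform convergence / sample-complexity theorem (proved via the growth function together with the symmetrization and double-sample argument) to the class $\mathcal H$ with $D:=\mathrm{VC}(\mathcal H)\le\mathrm{VC}(\mathcal C)$: there is an absolute constant $c>0$ such that whenever $|\mathcal S|\ge\tfrac{c}{\epsilon^2}\big(D\log D+D\log\tfrac1\epsilon+\log\tfrac1\delta\big)$, with probability at least $1-\delta$ every $h\in\mathcal H$ has empirical mean within $\epsilon$ of its expectation; monotonicity of the bound in $D$ lets us replace $D$ by $\mathrm{VC}(\mathcal C)$ in the sample-size condition. Unwinding the identifications from the first step yields exactly the displayed inequality. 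The only real work is the second step — getting the reduction $\mathrm{VC}(\mathcal H)\le\mathrm{VC}(\mathcal C)$ right, in particular handling repeated issues in the sample and the outcome-dependent negation; everything else is either definitional unpacking or an off-the-shelf invocation of VC theory.
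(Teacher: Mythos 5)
Your proof is correct and takes essentially the same route as the paper: both lift the problem to outcome--issue pairs drawn i.i.d.\ from a single distribution, argue that the relevant VC dimension is still controlled by $\mathrm{VC}(\mathcal C)$, and then invoke the classical VC uniform convergence theorem as a black box. Your explicit XOR/distinct-issues argument for $\mathrm{VC}(\mathcal H)\le\mathrm{VC}(\mathcal C)$ simply spells out the step the paper's reduction (via the deterministic-population case with duplicated issues) leaves as ``can be verified.''
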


\begin{remark}[On the Role of Generalization Bounds]
\changed{While Theorem \ref{theorem:bin_gen_bound_appendix} is an application of established Vapnik-Chervonenkis (VC) theory, its importance here is conceptual. It formally connects the `representativeness' of a social choice mechanism to the \emph{complexity of the candidate space} $\mathcal{C}$. This quantifies a trade-off, where a more expressive set of possible candidates (\emph{e.g.}, more nuanced political platforms or AI policies) requires significantly more preference samples to ensure the chosen candidate truly represents the population.}
\end{remark}

Intuitively, the sample complexity theorem states that the sample utility of any preference profile in the candidate space $\mathcal C$ approximates the population utility, as long as the sample size is sufficiently large and in proportion to the complexity of the candidate space.
Note that here we cannot directly apply the tail inequalities \cite{hellman1970probability} to bound the generalization error, because when the profile $C$ is picked to maximize sample utility, the sample utility ceases to be an unbiased estimator of the population utility. 


\subsection{Case Study: Majority Vote in the Binary Case}\label{sec:majority_vote}

In this section, we consider the \emph{majority vote} mechanism under the binary representative setting. It generalizes the well-known majority vote mechanism where the population directly votes a single issue, to the representative setting where the population is represented by a collection of individual-issue pairs.

\begin{definition}[Majority Vote Mechanism]
	The majority vote mechanism is a representational mechanism $f_{\mathrm{maj}}$ that outputs the preference profile $C$ that maximizes the sample utility, \emph{i.e.},
	\begin{equation}
		f_{\mathrm{maj}}(\mathcal S) = \arg\max_{C \in \mathcal C} \frac 1{|\mathcal S|} \sum_{k=1}^{|\mathcal S|} \mathbf{1}_{C(i_k)={o_k}}.
	\end{equation}
\end{definition}

The majority vote mechanism can be viewed as a voting process where each individual-issue pair in the sample collection $\mathcal{S}$ casts a vote for the outcome that the individual prefers, based on the individual's preference over the two outcomes of the issue. The candidate profile that receives the most votes is then selected as the aggregated preference profile. When there is only one issue and candidate space $\mathcal{C} = \mathrm{LO}(2)$, the majority vote mechanism reduces to the standard majority vote mechanism in social choice theory.

From Theorem \ref{theorem:bin_gen_bound_appendix}, we know that the majority vote mechanism has good generalization properties when the VC dimension of the candidate space $\mathcal{C}$ is small, resulting in the following corollary.

\begin{corollary}[Majority Vote Approximately Maximizes Population Utility]\label{cor:majority_vote_appendix}
	For any error requirement $\epsilon>0$ and confidence requirement $\delta>0$, the majority vote mechanism $f_{\mathrm{maj}}$ satisfies
	\begin{equation}
		\mathrm{Pr}\left[
		U_{\mathcal{D}_{\mathcal I},\mathcal{M}}(f_{\mathrm{maj}}(\mathcal S))
		\geq
		-2\epsilon + 
		\max_{C \in \mathcal C} U_{\mathcal{D}_{\mathcal I},\mathcal{M}}(C)
		\right]\geq 1-\delta
	\end{equation}
	where the population utility
	\begin{equation}
		U_{\mathcal{D}_{\mathcal I},\mathcal{M}}(C) \coloneq \mathrm{E}_{i\sim {\mathcal D}_{\mathcal I}}[\mathcal M(i)_{C(i)}],
	\end{equation}
	as long as the sample size $|\mathcal S|$ satisfies
	\begin{equation}
		|\mathcal S| \geq \frac{c}{\epsilon^2} \mathrm{VC}(\mathcal C)\left(\log\mathrm{VC}(\mathcal C) + \log\frac 1\epsilon + \log\frac{1}{\delta}\right)
	\end{equation}
\end{corollary}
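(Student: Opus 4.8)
The plan is to combine the uniform convergence guarantee of Theorem~\ref{theorem:bin_gen_bound_appendix} with the optimality property that defines $f_{\mathrm{maj}}$. Write $\hat U(C) \coloneq \frac{1}{|\mathcal S|}\sum_{k=1}^{|\mathcal S|}\mathbf 1_{C(i_k)=o_k}$ for the sample utility and $U(C)\coloneq U_{\mathcal{D}_{\mathcal I},\mathcal M}(C)$ for the population utility. Fix $\epsilon>0$ and $\delta>0$, and assume $|\mathcal S|$ satisfies the stated lower bound, so that by Theorem~\ref{theorem:bin_gen_bound_appendix} the event $\mathcal E \coloneq \{\,|\hat U(C)-U(C)|\le\epsilon \ \ \forall\, C\in\mathcal C\,\}$ has probability at least $1-\delta$. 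All remaining steps are carried out deterministically on $\mathcal E$.

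Next I would introduce the two comparison profiles: $\hat C \coloneq f_{\mathrm{maj}}(\mathcal S) = \arg\max_{C\in\mathcal C}\hat U(C)$, the sample-utility maximizer actually returned by the mechanism, and $C^\star \in \arg\max_{C\in\mathcal C}U(C)$, a population-utility maximizer. Then chain three inequalities: (i) $U(\hat C)\ge \hat U(\hat C)-\epsilon$, by applying the event $\mathcal E$ to the profile $\hat C$; (ii) $\hat U(\hat C)\ge \hat U(C^\star)$, since $\hat C$ maximizes sample utility over $\mathcal C$ by definition; (iii) $\hat U(C^\star)\ge U(C^\star)-\epsilon$, by applying $\mathcal E$ to $C^\star$. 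Composing the three gives $U(\hat C)\ge U(C^\star)-2\epsilon = \max_{C\in\mathcal C}U(C)-2\epsilon$, which is exactly the claimed bound; since it holds on $\mathcal E$, it holds with probability at least $1-\delta$.

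The one place I would be most careful is the uniformity of the generalization bound. Step~(i) applies the bound to $\hat C$, which is itself a function of the sample $\mathcal S$, so a pointwise tail inequality would be invalid there — precisely the issue noted in the remark following Theorem~\ref{theorem:bin_gen_bound_appendix} — and it is the VC-based bound holding simultaneously over all of $\mathcal C$ that makes the argument go through. A minor secondary point is the existence of $C^\star$: if $\mathcal C$ is infinite and the supremum of $U$ is not attained, I would instead pick a profile with $U(C^\star)\ge \sup_{C\in\mathcal C}U(C)-\epsilon'$, carry the extra $\epsilon'$ through, and let $\epsilon'\to 0$, or equivalently restate the corollary with $\sup$ in place of $\max$. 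Beyond these two remarks, this is the standard "uniform convergence plus empirical-risk-minimization" template and needs no further computation.
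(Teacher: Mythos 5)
Your proof is correct and follows essentially the same argument as the paper: the uniform convergence guarantee of Theorem~\ref{theorem:bin_gen_bound_appendix} applied to both $f_{\mathrm{maj}}(\mathcal S)$ and the population-optimal profile, chained through the sample-optimality of $f_{\mathrm{maj}}$, yielding the $2\epsilon$ loss. Your added caveats about uniformity over the sample-dependent maximizer and about $\sup$ versus $\max$ are sound but do not change the argument.
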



Furthermore, we can verify that the majority vote mechanism satisfies the following formal axioms, analogous to the classical \emph{Pareto efficiency} and \emph{non-dictatorship} axioms in social choice theory \cite{fishburn2015theory}.

\begin{axiombox}{Probabilistic Pareto Efficiency (PPE), Binary Case}
	Consider profiles $C,C'\in \mathcal C$ such that for the only issue $i\in\mathcal I$ that they disagree on, we have $1\succ_C 0, 0\succ_{C'} 1$ and the population is unanimous on $1\succ 0$.\footnote{In other words, $\mathcal{M}(i)_{1\succ 0}=1$} For any such $C,C'$ and a sufficiently large sample size $|\mathcal S|$, with probability at least $1-e^{\alpha|\mathcal S|}$,\footnote{\ldots where $\alpha>0$ is an arbitrary constant dependent only on $C,C'$ \changed{(we use $1-e^{\Omega(|\mathcal S|)}$ to denote a probability that converges to 1 exponentially fast in $|\mathcal S|$, where $\Omega(|\mathcal S|)$ is any quantity that asymptotically grows no slower than $|\mathcal S|$)}, and $|\mathcal S|$ is the number of samples. From now on, we will abbreviate this expression to $1-e^{\Omega(|\mathcal S|)}$.} $f_{\mathrm{maj}}(\mathcal S) \neq C'$. Likewise when $0\succ_{C} 1, 1\succ_{C'} 0$ and the population is unanimous on $0\succ 1$.
\end{axiombox}

\begin{remark}
	Introduction of the candidate space $\mathcal C$ leads to interdependence between issues, which is not present in the classical social choice setting. As a result, we could not simply require that the mechanism output $1\succ 0$ when the population is unanimous on $1\succ 0$ as in the classical setting --- what if the population is unanimous on $1\succ 0$ for one issue, but unanimous on $0\succ 1$ for another issue, and the two issues are strongly correlated (\emph{e.g.}, every candidate profile agrees on the two issues)? Cases like these, while less extreme, are widespread in the real world. Our fomulation of PPE avoids this problem by comparing two candidate profiles $C$ and $C'$ that keep the same preference on all issues except one, a way to ensure ``all else being equal''.

	$1-e^{\Omega(|\mathcal S|)}$ is the convergence rate from Hoeffding's inequality for independent samples. Intuitively, it's the confidence with which you can tell a majority from a minority in the population based on a sample $\mathcal S$.
\end{remark}

\begin{axiombox}{Probabilistic Non-Dictatorship (PND), Binary Case}
	For any issue $i\in\mathcal{I}$, for any subpopulation ${\mathcal{D}'}_\mathcal{P}$ that occupies a probability mass $|{\mathcal{D}'}_\mathcal{P}|<0.5$ in the whole population $\mathcal{D}_\mathcal{P}$, at least one of below is true w.r.t. issue $i$:
  \vspace{-0.5em}
	\begin{itemize}[leftmargin=2em]
    \setlength\itemsep{0.1em}
		\item When ${\mathcal{D}'}_\mathcal{P}$ is unanimous on $0\succ 1$, there exists a \changed{population distribution} $\mathcal{D}_\mathcal{P}$ for which $f_{\mathrm{maj}}(\mathcal S)(i) = (1\succ 0)$ with probability $1-e^{\Omega(|\mathcal S|)}$.
		\item When ${\mathcal{D}'}_\mathcal{P}$ is unanimous on $1\succ 0$, there exists a \changed{population distribution} $\mathcal{D}_\mathcal{P}$ for which $f_{\mathrm{maj}}(\mathcal S)(i) = (0\succ 1)$ with probability $1-e^{\Omega(|\mathcal S|)}$.
	\end{itemize}
  \vspace{-0.5em}
\end{axiombox}

\begin{remark}
	Here, the requirement that \emph{at least one} (as opposed to \emph{both}) condition is fulfilled is, again, a consequence of the candidate space $\mathcal C$. In the extreme case, if all candidate profiles agree that $0\succ 1$ for issue $i$, then at most one of the two conditions can be met.
\end{remark}

The PND axiom above is stronger than the classical version, in the sense that not only individuals, but also coalitions, cannot dictate the aggregation result. In fact, our problem setting treats individuals as interchangeable, thereby implicitly forcing \emph{anonymity} of the mechanism --- a stronger property than PND. As a result, representational mechanisms always satisfy PND, a fact that is formalized in Lemma \ref{lem:pnd_all_mechanisms_appendix}.

Note that the classical axiom of independence of irrelevant alternatives (IIA) \cite{fishburn2015theory} is not applicable to the binary case, since irrelevant alternatives only exist when $N\geq 3$.

\section{General Representative Social Choice}\label{sec:general_representative}

Having formulated and analyzed the binary case of representative social choice, we now extend our analysis to the general case where there can be an arbitrary number of outcomes for each issue. 

\subsection{Scoring Rules and Generalization Errors}\label{sec:scoring_rules}

Generalizing the majority vote mechanism to the general case meets challenges, as the majority vote is over candidate profiles as opposed to outcomes, and when $N>2$, the way each individual-issue pair votes is no longer well-defined. Instead, we can consider a more general class of mechanisms called \emph{scoring mechanisms} \cite{lepelley2000scoring}, which for every individual-issue pair, assign a score to each candidate profile, and then output the candidate profile with the highest average score.

As an example, the scoring rule could be an arbitrary distance measure that measures the degree of alignment between the sampled individual's preference and the candidate profile's preference. The mechanism then outputs the candidate profile that maximizes the average sample alignment score.

\begin{definition}
	A scoring mechanism is defined by a \emph{scoring rule} $s:\mathrm{LO}(N)\times\mathrm{LO}(N)\rightarrow\mathbb R$, which assigns a score to each pair of preference orderings over the $N$ outcomes. The mechanism then outputs the candidate profile that maximizes the average score over all individual-issue pairs in the sample collection $\mathcal{S}$, \emph{i.e.},
	\begin{equation}
		f_s(\mathcal S) = \arg\max_{C \in \mathcal C} \frac 1{|\mathcal S|} \sum_{k=1}^{|\mathcal S|} s(o_k, C(i_k)).
	\end{equation}
\end{definition}

For scoring mechanisms, we can similarly define the sample score $\frac 1{|\mathcal S|} \sum_{k=1}^{|\mathcal S|} s(o_k, C(i_k))$ and the population score $U^s_{{\mathcal D}_{\mathcal I},\mathcal M}\coloneq\mathrm{E}_{i\sim {\mathcal D}_{\mathcal I},o\sim \mathcal M(i)}[s(o, C(i))]$, and have the following guarantees on generalization.

\begin{theorem}[Generalization Bound for Scoring Mechanisms]\label{theorem:gen_bound_scoring_appendix}
	For any scoring rule $s$ with value bounded by constants, the scoring mechanism $f_s$ satisfies
	\begin{equation}
		\mathrm{Pr}\left[
		U^s_{{\mathcal D}_{\mathcal I},\mathcal M}(f_s(\mathcal S))
		\geq
		-2\epsilon + 
		\max_{C \in \mathcal C} U^s_{{\mathcal D}_{\mathcal I},\mathcal M}(C)
		\right]\geq 1-\delta
	\end{equation}
	as long as the sample size $|\mathcal S|$ satisfies
	\begin{equation}
      |\mathcal S| \geq \frac{c}{\epsilon^2} \log\frac{1}{\delta}
,\ 
      {\mathrm{\hat{R}}}_{|\mathcal S|}(\bar{\mathcal C}) \leq \frac{\epsilon}{c},
  \text{ where }
      \bar{\mathcal C} \coloneq \{(o, i) \mapsto s(o, C(i)) \mid C \in \mathcal C\}
  \end{equation}
    and ${\mathrm{\hat{R}}}_{|\mathcal S|}(\bar{\mathcal C})$ is the \emph{empirical Rademacher complexity} of $\bar{\mathcal C}$ with respect to the sample collection $\mathcal S$ \cite{mohri2008rademacher} --- a generalization of the VC dimension to real-valued (as opposed to binary) functions.
\end{theorem}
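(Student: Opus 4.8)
The plan is to reduce the statement to a standard uniform‑convergence bound over the induced real‑valued class $\bar{\mathcal C}$, exactly paralleling the binary proof but with Rademacher complexity in place of VC dimension. The samples $(o_k,i_k)$ are i.i.d.\ (draw $i_k\sim\mathcal D_{\mathcal I}$, then $o_k\sim\mathcal M(i_k)$), and each function $g_C\in\bar{\mathcal C}$, $g_C(o,i)=s(o,C(i))$, is bounded since $s$ is bounded by constants; write $[a,b]$ for its range and $M=b-a$. The population score $U^s_{\mathcal D_{\mathcal I},\mathcal M}(C)=\mathrm E[g_C(o,i)]$ is precisely the expectation of the empirical average $\widehat U^s(C)=\frac1{|\mathcal S|}\sum_k g_C(o_k,i_k)$, so this is a textbook setup.

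First I would invoke the standard Rademacher generalization theorem (e.g.\ Mohri et al.): with probability at least $1-\delta$, simultaneously for all $C\in\mathcal C$,
\begin{equation}
\left|\widehat U^s(C)-U^s_{\mathcal D_{\mathcal I},\mathcal M}(C)\right|\le 2\,\widehat{\mathrm R}_{|\mathcal S|}(\bar{\mathcal C})+3M\sqrt{\frac{\log(2/\delta)}{2|\mathcal S|}}.
\end{equation}
Under the two hypotheses $|\mathcal S|\ge\frac c{\epsilon^2}\log\frac1\delta$ and $\widehat{\mathrm R}_{|\mathcal S|}(\bar{\mathcal C})\le\epsilon/c$, both terms on the right are at most $\epsilon$ for a suitable absolute constant $c$ (absorbing the $M$, the numerical factors, and the $\log 2$ into $c$), giving uniform $\epsilon$‑closeness of sample and population scores with probability $\ge1-\delta$. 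This is the analogue of Theorem~\ref{theorem:bin_gen_bound_appendix}; indeed for binary‑valued $g_C$ one would finish by the Massart/Dudley bound $\widehat{\mathrm R}_{|\mathcal S|}\lesssim\sqrt{\mathrm{VC}(\mathcal C)/|\mathcal S|}$, recovering the earlier form.

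Second, I would run the same two‑sided comparison argument used for Corollary~\ref{cor:majority_vote_appendix}. Let $C^\star\in\arg\max_{C\in\mathcal C}U^s_{\mathcal D_{\mathcal I},\mathcal M}(C)$ and $\hat C=f_s(\mathcal S)$. On the $(1-\delta)$‑event above: $U^s(\hat C)\ge\widehat U^s(\hat C)-\epsilon\ge\widehat U^s(C^\star)-\epsilon\ge U^s(C^\star)-2\epsilon$, where the middle inequality is the defining optimality of $f_s$ on the sample. That is exactly the claimed bound. I would note (as the paper does elsewhere) that one cannot apply a plain tail inequality to $\hat C$ directly, since $\hat C$ depends on $\mathcal S$ and $\widehat U^s(\hat C)$ is then a biased estimate of $U^s(\hat C)$ — hence the need for the uniform bound over the whole class.

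The only genuinely delicate point — the main obstacle — is bookkeeping the constants so that a single universal $c$ simultaneously governs the sample‑size condition, the Rademacher threshold, and the boundedness constant $M$ of $s$; since "$M$" is itself just "a constant" in the hypothesis, this is legitimate but should be stated carefully (the constant $c$ depends on the bound on $|s|$). A secondary subtlety is measurability/uniform‑convergence regularity of $\bar{\mathcal C}$ when $\mathcal C$ is infinite: one should either assume the mild pointwise‑countable‑approximability condition standard in these statements, or observe that for the intended applications $\mathcal C$ (hence $\bar{\mathcal C}$) is suitably well‑behaved. Apart from that, the proof is a direct transcription of the Rademacher‑complexity symmetrization argument plus the optimality‑of‑the‑argmax step, with no new ideas beyond those already deployed in the binary case.
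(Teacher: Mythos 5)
Your proposal is correct and matches the paper's intended argument: the paper simply states that the theorem "follows directly from the Rademacher generalization bounds," and your write-up supplies exactly that — the standard uniform-convergence bound over $\bar{\mathcal C}$ followed by the same argmax comparison chain used in Corollary~\ref{cor:majority_vote_appendix}. The constant-bookkeeping and measurability caveats you note are reasonable and do not change the substance.
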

    The result follows directly from the Rademacher generalization bounds in statistical learning theory.

\begin{corollary}[Scoring Mechanisms Approximately Maximize Population Score]\label{cor:scoring_mechanism_appendix}
	When $|{\mathcal I}|$ is finite, for any scoring rule $s$ with value bounded by constants, the scoring mechanism $f_s$ satisfies
	\begin{equation}
		\mathrm{Pr}\left[
		U^s_{{\mathcal D}_{\mathcal I},\mathcal M}(f_s(\mathcal S))
		\geq
		-2\epsilon + 
		\max_{C \in \mathcal C} U^s_{{\mathcal D}_{\mathcal I},\mathcal M}(C)
		\right]\geq 1-\delta
	\end{equation}
    as long as we have the following, for some constant $c > 0$:
	\begin{equation}
		|\mathcal S| \geq \frac{c}{\epsilon^2} \left(|{\mathcal I}|N\log N + \log\frac{1}{\delta}\right)
	\end{equation}
\end{corollary}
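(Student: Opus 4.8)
The plan is to deduce the corollary from Theorem~\ref{theorem:gen_bound_scoring_appendix} by bounding the empirical Rademacher complexity $\hat{\mathrm R}_{|\mathcal S|}(\bar{\mathcal C})$ in the regime where $\mathcal I$ is finite. The key observation is that when $|\mathcal I|$ is finite, a candidate profile $C \in \mathcal C \subseteq \mathrm{LO}(N)^{\mathcal I}$ is completely determined by its values on the $|\mathcal I|$ issues, each value lying in the finite set $\mathrm{LO}(N)$ of size $N!$. Hence $|\mathcal C| \le (N!)^{|\mathcal I|}$, and the induced function class $\bar{\mathcal C} = \{(o,i)\mapsto s(o,C(i)) \mid C \in \mathcal C\}$ is finite with $\log|\bar{\mathcal C}| \le |\mathcal I|\log(N!) \le |\mathcal I|\, N\log N$.

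Next I would apply the finite-class (Massart) bound on Rademacher complexity: since $s$ is bounded in absolute value by some constant $M$ (``value bounded by constants''), every vector $(s(o_k, C(i_k)))_{k \le |\mathcal S|}$ has Euclidean norm at most $M\sqrt{|\mathcal S|}$, so
\begin{equation}
  \hat{\mathrm R}_{|\mathcal S|}(\bar{\mathcal C}) \;\le\; M\sqrt{\frac{2\log|\bar{\mathcal C}|}{|\mathcal S|}} \;\le\; M\sqrt{\frac{2\,|\mathcal I|\, N\log N}{|\mathcal S|}},
\end{equation}
a bound that holds uniformly over $\mathcal S$, not just in expectation. Requiring the right-hand side to be at most $\epsilon/c$ — the second hypothesis of Theorem~\ref{theorem:gen_bound_scoring_appendix} — amounts to $|\mathcal S| \ge 2c^2 M^2\, |\mathcal I|\, N\log N/\epsilon^2$, which is of the claimed form. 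Combining this with the first hypothesis $|\mathcal S| \ge (c/\epsilon^2)\log(1/\delta)$ and using $a+b \ge \max(a,b)$, both conditions are implied by a single bound $|\mathcal S| \ge (c'/\epsilon^2)\big(|\mathcal I|\, N\log N + \log(1/\delta)\big)$ for a suitable absolute constant $c'$; relabeling $c'$ as $c$ yields exactly the statement.

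The argument is essentially bookkeeping once the finiteness of $\bar{\mathcal C}$ is noted, so I do not anticipate a genuine obstacle. The one point requiring care is that the generic constant $c$ in the corollary must absorb both the scale $M$ of the scoring rule (which is a ``constant'' by hypothesis but enters quadratically through the Rademacher bound) and the numerical factors from Massart's lemma, and that the two separate sample-size conditions of Theorem~\ref{theorem:gen_bound_scoring_appendix} are being folded into one. An alternative to Massart's lemma would be a direct growth-function or covering-number estimate on $\bar{\mathcal C}$, but since $\bar{\mathcal C}$ is already finite the finite-class bound is the most economical route.
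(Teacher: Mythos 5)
Your proposal is correct and follows essentially the same route as the paper: the paper's proof also bounds $\hat{\mathrm R}_{|\mathcal S|}(\bar{\mathcal C})$ via Massart's lemma using $|\bar{\mathcal C}| \le (N!)^{|\mathcal I|}$ and then plugs the bound into Theorem~\ref{theorem:gen_bound_scoring_appendix}. Your version simply spells out the bookkeeping (the $\log(N!)\le N\log N$ step, the boundedness constant $M$, and the folding of the two sample-size conditions into one) that the paper leaves implicit.
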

	
Rademacher complexity generalizes the VC dimension to real-valued functions, and measures the capacity of a function class to fit arbitrary finite samples. Corollary \ref{cor:scoring_mechanism_appendix} states that the sample score of any candidate profile in the candidate space $\mathcal C$ approximates the population score, as long as the sample size exceeds the ability of the candidate space to fit arbitrary finite samples --- \emph{i.e.}, when the mechanism is forced to \emph{generalize}.

Corollary \ref{cor:scoring_mechanism_appendix} is agnostic towards the correlation structure among issues, and as a result, the sample complexity grows approximately linearly with the number of issues $|\mathcal I|$ and the number of outcomes $N$. When given a issue correlation structure, the sample complexity can potentially be reduced using Theorem \ref{theorem:gen_bound_scoring_appendix}.

\subsection{Weak Representative Impossibility}\label{sec:weak_impossibility_theorem}

In this section, we present axioms that ideal social choice mechanisms should satisfy, and present Arrow-like impossibility theorems, showing that no mechanism can satisfy all these axioms simultaneously. A weaker version of the impossibility theorem --- which we present in this section --- is a simple generalization of Arrow's impossibility theorem, while a stronger version shall be derived in Section \ref{sec:strong_impossibility_theorem}. We introduce the necessary notations and then restate the axioms from the binary case, adapting them to our general setting.

\begin{definition}[Operation of a Permutation on an Ordering]
	For a set $S$, a linear order $o\in\mathrm{LO}(S)$, and a permutation $\sigma\in\mathfrak{S}_S$, we define the operation $o\odot\sigma$ as the ordering obtained by applying the permutation $\sigma$ to the elements of $o$. Specifically, for any $s_1,s_2\in S$, we have $s_1\succ_{o\odot\sigma} s_2$ if and only if $\sigma^{-1}(s_1)\succ_o \sigma^{-1}(s_2)$.

	For a subset $T\subseteq S$ and $\sigma\in \mathfrak{S}_T$, we similarly define $o\odot\sigma\coloneqq o\odot\sigma|_S$, where $\sigma|_S$ is the extension of $\sigma$ to the whole set $S$, mapping elements outside $T$ to themselves. For a full profile $C\in\mathrm{LO}(N)^{\mathcal I}$, we define $C\odot_i\sigma$ as the profile obtained by applying the permutation $\sigma$ to $C(i)$, while keeping $C$ unchanged for all other issues.
\end{definition}
\begin{remark}
	We denote with $\sigma_{(c_1, c_2)}$ the $2$-element permutation (transposition) that swaps $c_1$ and $c_2$. Operation of a permutation on an ordering aims to capture ``local changes'' to a preference profile, where the permutation only affects a subset of the outcomes. They will be used in the definition of the axioms below. 
\end{remark}

\begin{axiombox}{Probabilistic Pareto Efficiency (PPE)}
	Consider profiles $C,C'\in \mathcal C$ such that for the only issue $i\in\mathcal I$ that they disagree on, there exists $c,c'\in [N]$ such that $c\succ_C c', C'=C\odot_i\sigma_{(c,c')}$\footnote{The second condition means that the order between $c,c'$ is the only disagreement they have on issue $i$.} and the population is unanimous on $c\succ c'$.\footnote{In other words, $\mathcal{M}(i)_{o} = 0$ for all $o\in \mathrm{LO}(N)$ that prefers $c'$ over $c$.} For any such $C,C'$, with probability $1-e^{\Omega(|\mathcal S|)}$, $f(\mathcal S) \neq C'$.
\end{axiombox}


\begin{axiombox}{Probabilistic Non-Dictatorship (PND)}
	For any issue $i\in\mathcal{I}$ and $c,c'\in[N]$, for any subpopulation ${\mathcal{D}'}_\mathcal{P}$ that occupies a probability mass $|{\mathcal{D}'}_\mathcal{P}|<0.5$ in the whole population $\mathcal{D}_\mathcal{P}$, at least one of below is true w.r.t. issue $i$:
  \vspace{-0.5em}
	\begin{itemize}[leftmargin=2em]
    \setlength\itemsep{0.1em}
		\item When ${\mathcal{D}'}_\mathcal{P}$ is unanimous on $c\succ c'$, there exists a \changed{population distribution} $\mathcal{D}_\mathcal{P}$ for which $c'\succ_{f(\mathcal S)} c$ with probability $1-e^{\Omega(|\mathcal S|)}$.
		\item When ${\mathcal{D}'}_\mathcal{P}$ is unanimous on $c'\succ c$, there exists a \changed{population distribution} $\mathcal{D}_\mathcal{P}$ for which $c\succ_{f(\mathcal S)} c'$ with probability $1-e^{\Omega(|\mathcal S|)}$.
	\end{itemize}
  \vspace{-0.5em}
\end{axiombox}

In fact, \emph{all} representational mechanisms satisfy PND, since our setting treats individuals as interchangeable and homogeneous, therefore implicitly forcing \emph{anonymity} of the mechanism, which inturn implies PND. 

\begin{lemma}[Probabilistic Non-Dictatorship for All Representational Mechanisms]\label{lem:pnd_all_mechanisms_appendix}
	For any $(\mathcal I, \mathcal D_{\mathcal I}, \mathcal C)$ and any representational mechanism $f$, PND is satisfied.
\end{lemma}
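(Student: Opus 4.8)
The plan is to obtain PND as a corollary of \emph{anonymity}, exactly as the surrounding text suggests. First I would isolate the one structural property of representational mechanisms that matters: $f$ acts only on the sample $\mathcal S$, and $\mathcal S$ consists of $|\mathcal S|$ i.i.d. pairs $(o_k,i_k)$ with $i_k\sim\mathcal D_{\mathcal I}$ and $o_k\sim\mathcal M(i_k)$, so the law of $f(\mathcal S)$ is a function of the population marginal $\mathcal M$ alone. In particular, two population specifications with the same $\mathcal M$ induce the same distribution over outcomes, and the mechanism has no way to tell from $\mathcal S$ which sub-coalition $\mathcal D'_{\mathcal P}$ a sampled opinion came from. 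That is all I will use about $f$.

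Then I would fix an issue $i$, distinct outcomes $c,c'\in[N]$, and a sub-coalition $\mathcal D'_{\mathcal P}$ of mass $p<1/2$, and prove the ``at least one of the two bullets'' disjunction by cases on how $\mathcal C$ constrains the $\{c,c'\}$-order on issue $i$. If every $C\in\mathcal C$ has $c\succ_{C(i)}c'$, then $f(\mathcal S)$ always orders issue $i$ as $c\succ c'$, so the second bullet holds trivially: pick any completion $\mathcal D_{\mathcal P}$ in which $\mathcal D'_{\mathcal P}$ (of mass $p<1$) is unanimous on $c'\succ c$, and then $c\succ_{f(\mathcal S)}c'$ holds with probability $1$. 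The mirror case, every $C$ with $c'\succ_{C(i)}c$, yields the first bullet. This is precisely the situation the adjacent remark flags: when $\mathcal C$ pins the order on issue $i$, only one bullet can possibly be arranged, and it automatically is.

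In the remaining case $\mathcal C$ realizes both orders on issue $i$, and here I would exploit $p<1/2$. Enlarge $\mathcal D'_{\mathcal P}$ (say unanimous on $c\succ c'$) to a completion in which a disjoint block of mass $1-p>1/2$ is unanimous on $c'\succ c$, so the population puts mass $1-p$ on $c'\succ c$ for issue $i$; by Hoeffding, the empirical $c'$-versus-$c$ tally among issue-$i$ sample points is a strict $c'$-majority with probability $1-e^{-\Omega(|\mathcal S|)}$, and by anonymity this empirical picture is the only population-level signal $f$ can react to. I would then close via a coalition-doubling argument: were $\mathcal D'_{\mathcal P}$ able to obtain its favored order in \emph{both} directions, placing two disjoint minority blocks with opposite unanimous preferences in one population (possible since $2p<1$) would force $f$ to order issue $i$ as both $c\succ c'$ and $c'\succ c$ with high probability, which is impossible for a linear order. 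Hence at least one bullet must hold, i.e.\ PND holds; since $(\mathcal I,\mathcal D_{\mathcal I},\mathcal C)$ and $f$ were arbitrary, the lemma follows.

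I expect the only real difficulty to be bookkeeping: threading the existential ``there exists a completion $\mathcal D_{\mathcal P}$'', the ``at least one of the two bullets'' disjunction, and the probabilistic qualifier ``with probability $1-e^{-\Omega(|\mathcal S|)}$'' so that the trivial case is invoked when $\mathcal C$ fixes the order and the Hoeffding/anonymity step is invoked otherwise. Conceptually there is nothing beyond the single observation that a representational mechanism cannot see past the population marginal $\mathcal M$.
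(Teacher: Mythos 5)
Your core argument is the paper's own: using the interchangeability (anonymity) of equal-mass subpopulations, a minority coalition that could obtain its preferred $\{c,c'\}$ order in both directions could be mirrored by a disjoint coalition of the same mass with the opposite unanimous preference, forcing $f(\mathcal S)$ to order issue $i$ both ways with high probability --- the same doubling contradiction the paper uses, under the same reading of what it means for PND to fail. The preliminary case split on whether $\mathcal C$ pins down the $\{c,c'\}$ order on issue $i$ is harmless but not needed in the paper's proof. One caution: the Hoeffding/majority paragraph does no work --- an arbitrary representational mechanism need not track the empirical majority, so exhibiting a completion in which mass $1-p>1/2$ prefers $c'\succ c$ does not by itself establish either bullet of the axiom; all of the force of your proof (and of the paper's) lies in the coalition-doubling step, so you should not lean on that intermediate paragraph as if it certified one of the bullets directly.
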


As a result, we shall remove PND from the explicit statements of the impossibility theorems, but it should be understood that PND is still implicitly satisfied. Finally, we define a weak version of the independence of irrelevant alternatives (IIA) axiom, as well as a new axiom specific to the representative setting. 

\begin{axiombox}{Weak Probabilistic Independence of Irrelevant Alternatives (W-PIIA)}
	When $\mathcal C = \mathrm{LO}(N)^{\mathcal I}$, for two populations $\mathcal{D}_{\mathcal P},\mathcal{D}'_{\mathcal P}$ that differ only in the preference over a single issue $i\in\mathcal I$ satisfying $\mathcal{M}(i)\mid_{\mathrm{LO}(\{c,c'\})} = \mathcal{M}'(i)\mid_{\mathrm{LO}(\{c,c'\})}$\changed{\footnote{\changed{Here, $\mathcal{M}(i)\mid_{\mathrm{LO}(\{c,c'\})}$ denotes the marginal distribution of preferences when restricted to the set $\{c,c'\}$.}}} (where $c,c'$ are any two elements of $[N]$),\footnote{Meaning that the population's distribution over the preference between $c,c'$ are the same in the two populations.} with probability $1-e^{\Omega(|\mathcal S|)}$, we have $f(\mathcal S)\mid_{\mathrm{LO}(\{c,c'\})}=f(\mathcal S')\mid_{\mathrm{LO}(\{c,c'\})}$.
\end{axiombox}

\begin{remark}
	\changed{The assumption $\mathcal C = \mathrm{LO}(N)^{\mathcal I}$ implies a \emph{structural independence} in the candidate space: any preference over one issue can be combined with any preference over any other. As a result, we avoid ``cross-issue'' or ``cross-outcome'' \emph{structural} dependencies; for example, we don't worry that a population's insistence on $c_1\succ c_2$ might force the side effect of $c_3 \succ c_4$, simply because all available profiles in $\mathcal C$ happen to bundle $\{c_1,c_3\}$ together and $\{c_2,c_4\}$ together. This structural independence of $\mathcal{C}$ should be distinguished from \emph{statistical correlations} in the population's preferences $\mathcal{D}_{\mathcal P}$ (which are not restricted by this assumption). This simplification makes $\mathcal C = \mathrm{LO}(N)^{\mathcal I}$ the easiest candidate space to analyze. We will lift this assumption in Section \ref{sec:privileged_orderings} and \ref{sec:strong_impossibility_theorem}.}
\end{remark}

\begin{axiombox}{Weak Probabilistic Convergence (W-PC)}
	When $\mathcal C = \mathrm{LO}(N)^{\mathcal I}$, for a population $\mathcal{D}_{\mathcal P}$ that's non-uniform on outcomes $c,c'$ of issue $i\in\mathcal I$,\footnote{\emph{i.e.}, $\mathcal{M}(i)\mid_{\mathrm{LO}(\{c,c'\})}(c\succ c')\neq 0.5$. We are only concerned with the marginal distribution.} there exists an ordering $o\in\mathrm{LO}(\{c,c'\})$ such that with probability $1-e^{\Omega(|\mathcal S|)}$, $f(\mathcal S)(i)\mid_{\mathrm{LO}(\{c,c'\})}=o$.
\end{axiombox}

\begin{remark}
	W-PC requires that the mechanism not be torn between two outcomes when the population is not torn between them. This is to rule out ``indecisive'' mechanisms that are unable to make a decision with high probability (or requires too many samples to make that decision) when the population has clear preference. This need arises only in the representative setting where the mechanism is no longer deterministic.
\end{remark}

\begin{theorem}[Weak Representative Impossibility]\label{theorem:weak_rep_impossibility_appendix}
	When $N\geq 3$, for any $(\mathcal I, \mathcal D_{\mathcal I}, \mathcal C = \mathrm{LO}(N)^{\mathcal I})$, no representational mechanism simultaneously satisfies PPE, W-PIIA, and W-PC for all $\mathcal D_{\mathcal P}$.
\end{theorem}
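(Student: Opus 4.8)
The plan is to reduce to the classical Arrow impossibility theorem: from any representational mechanism $f$ satisfying PPE, W-PIIA and W-PC I would extract a classical social welfare function $F$ on $\mathrm{LO}(N)$ that inherits unrestricted domain, weak Pareto, independence of irrelevant alternatives, and non-dictatorship, which is impossible for $N\geq 3$. Fix any issue $i^\star\in\mathcal I$, a reference ordering $\bar o\in\mathrm{LO}(N)$, and an odd integer $n\geq 3$. To a tuple $P=(o_1,\ldots,o_n)\in\mathrm{LO}(N)^n$ I associate the population $\mathcal D_P$ in which every individual holds $\bar o$ on each issue $j\neq i^\star$ and holds $o_k$ on issue $i^\star$ for a uniformly random $k\in[n]$. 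Since $n$ is odd, $\mathcal D_P$ is non-uniform on every one of the $\binom N2$ pairs of outcomes of issue $i^\star$, so W-PC applies to each such pair; a union bound over these finitely many pairs (finite because $N$ is finite) shows that $f(\mathcal S)(i^\star)$ converges in probability, as $|\mathcal S|\to\infty$, to a single linear order --- the unique order compatible with all per-pair W-PC limits --- which I call $F(P)\in\mathrm{LO}(N)$. By construction $F$ has unrestricted domain.

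I would then check the remaining Arrovian axioms. \textbf{IIA.} If $P$ and $P'$ induce the same relative order of $c$ and $c'$ for each of the $n$ voters, then $\mathcal D_P$ and $\mathcal D_{P'}$ differ only at issue $i^\star$ and have the same marginal there on $\{c,c'\}$, so W-PIIA gives $f(\mathcal S)(i^\star)\mid_{\{c,c'\}}=f(\mathcal S')(i^\star)\mid_{\{c,c'\}}$ with probability $1-e^{\Omega(|\mathcal S|)}$; letting $|\mathcal S|\to\infty$ yields $F(P)\mid_{\{c,c'\}}=F(P')\mid_{\{c,c'\}}$. \textbf{Non-dictatorship.} $\mathcal D_P$ depends only on the multiset $\{o_1,\ldots,o_n\}$, so $F$ is anonymous, and an anonymous social welfare function on $n\geq 2$ voters over $N\geq 2$ alternatives is non-dictatorial (distinct voters would all have to be dictators, impossible once two of them disagree); this is exactly the content of PND, which by Lemma \ref{lem:pnd_all_mechanisms_appendix} holds automatically and may therefore be omitted from the statement. \textbf{Weak Pareto.} If every $o_k$ ranks $c\succ c'$, then $\mathcal D_P$ is unanimous on $c\succ c'$ at issue $i^\star$; by IIA I may replace $P$ by the constant tuple $(o^\circ,\ldots,o^\circ)$ for any $o^\circ$ with $c\succ_{o^\circ}c'$, whose population is unanimous on \emph{every} pair of issue $i^\star$.

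The crux --- and the step I expect to be the main obstacle --- is to show that when everyone holds the same ordering $o^\circ$ on issue $i^\star$, the mechanism's limiting output there equals $o^\circ$, so that $F\big((o^\circ,\ldots,o^\circ)\big)=o^\circ$ and hence $c\succ_{F(P)}c'$. The combinatorial ingredient is that any linear order $o'\neq o^\circ$ on $[N]$ has at least one \emph{adjacent} pair inverted relative to $o^\circ$ (otherwise $o'$ and $o^\circ$ share the same covering relation and must coincide); PPE applied to that pair, for the all-$o^\circ$ population, forbids the mechanism from outputting a profile whose $i^\star$-component is $o'$, and W-PC pins $f(\mathcal S)(i^\star)$ down to a single ordering, which must therefore be $o^\circ$. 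Care is needed on two fronts: one must pass correctly between the ``$1-e^{\Omega(|\mathcal S|)}$'' guarantees and their $|\mathcal S|\to\infty$ limits, and --- when $\mathcal I$ is infinite --- one must reconcile the fact that PPE forbids individual \emph{full} profiles while we wish to constrain only the $i^\star$-coordinate; I would handle this by combining, along a finite exhaustion of $\mathcal I$, the per-coordinate W-PC concentration with the exponentially small PPE failure probabilities so that the forbidden $i^\star$-outputs carry vanishing total probability.

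Putting the pieces together, $F$ is a social welfare function on $n\geq 2$ voters and $N\geq 3$ alternatives satisfying unrestricted domain, weak Pareto, IIA and non-dictatorship, contradicting Arrow's impossibility theorem; hence no representational mechanism can satisfy PPE, W-PIIA and W-PC simultaneously for all $\mathcal D_{\mathcal P}$. The only non-routine ingredients are the adjacent-inversion lemma and the Pareto argument sketched above; IIA, non-dictatorship and unrestricted domain follow essentially immediately from the construction together with W-PIIA and anonymity.
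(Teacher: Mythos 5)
Your proposal follows essentially the same route as the paper's own proof: simulate an $n$-voter Arrow instance (odd $n$, equal-mass interchangeable subpopulations) inside the representative setting, use W-PC to extract a deterministic limiting order, W-PIIA for IIA, anonymity/PND (Lemma \ref{lem:pnd_all_mechanisms_appendix}) for non-dictatorship, PPE for weak Pareto, and the product structure $\mathcal C = \mathrm{LO}(N)^{\mathcal I}$ to examine the chosen issue in isolation, before invoking Arrow's impossibility theorem. Your handling of the Pareto/unanimity step (the adjacent-pair inversion argument and the subtlety that PPE forbids full profiles rather than single-issue outputs when $\mathcal I$ is infinite) is in fact spelled out in more detail than in the paper, whose proof simply asserts that W-PC yields a deterministic limit and that issues can be treated independently.
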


The weak representative impossibility theorem is ``weak'' in the sense that it assumes $\mathcal{C} = \mathrm{LO}(N)^{\mathcal I}$, \emph{i.e.}, no interdependence between issues or outcomes exist. This shall change in the next section.

\subsection{Privileged Orderings and Privilege Graph}\label{sec:privileged_orderings}

Before we present the strong representative impossibility theorem, we need tools to represent and analyze the structure of the candidate space $\mathcal C$. To this end, we introduce the concept of \emph{privileged orderings}, which are partial orderings that are preferred over all other alternative orderings in the candidate space.

\begin{definition}[Privileged Ordering]
	For an issue $i\in\mathcal I$ and a subset of outcomes $T=\{c_1,c_2,\cdots,c_k\}\subseteq[N]\ (k\geq 2)$, we call $o\in \mathrm{LO}(T): c_1\succ c_2\succ\cdots\succ c_k$ a \emph{privileged ordering} if for any extension $C\in \mathrm{LO}(N)^{\mathcal I}$ of $o$\footnote{\emph{i.e.}, $C$ must agree with $o$ on issue $i$ that $c_1\succ\cdots\succ c_k$, but can differ on the remaining outcomes of $i$ and on other issues.} and permutation $\sigma\in\mathfrak{S}_T$ of $T$, we have $C\odot_i \sigma\in \mathcal{C} \implies C\in \mathcal{C}$.
\end{definition}
\begin{remark}
	Intuitively, a privileged ordering $o$ is a partial ordering that is preferred in the candidate space $\mathcal C$ over all other alternative partial orderings. Any full preference profile in $\mathcal C$ that disagrees with $o$ must have a counterpart in $\mathcal C$ that agrees with $o$ while keeping the rest of the profile unchanged. These privileged orderings are in fact surprisingly common in practice, as will be showcased in Example \ref{example:privilege_graph} and \ref{example:privilege_graph_2}.
\end{remark}

\begin{definition}[Privilege Graph]\label{def:privilege_graph}
	For an issue $i\in\mathcal I$, we define the \emph{privilege graph} $G_i$ as a directed graph with vertices as the outcomes in $[N]$, and an edge from $u$ to $v$ iff there exists a privileged ordering $u\succ v$.
	We call $i$ \emph{cyclically privileged} if its privilege graph contains a simple directed cycle of length at least $3$.\footnote{Note that for our purposes, we don't consider graphs containing only $2$-cycles as cyclic.}
\end{definition}

\begin{example}[Privileged Orderings in the Real World]\label{example:privilege_graph}
  Consider three persons $\mathcal{I}=\{\text{A}, \text{B}, \text{C}\}$ on trial before the jury. A is charged of burglary (\$5k, first trial), B also of burglary (\$50k, second trial), and C of fraud. The $N=3$ possible outcomes for each defendant include aqcuittal ($\mathbf{a}$), community service ($\mathbf{c}$), and imprisonment ($\mathbf{i}$). The jury ranks the outcomes for each defendant in order of recommendation. In this hypothetical case, the following factors may lead to privileged orderings:
  \vspace{-0.5em}
	\begin{itemize}[leftmargin=2em]
    \setlength\itemsep{0.1em}
		\item \textbf{Monotonicity and fairness constraints.} The jury may consider it unfair to punish A harder than B given the difference in the amount of burglary. Thus, locally changing the recommended outcome of A from $\mathbf{c}$ to $\mathbf{i}$ may violate this monotonicity (namely when the outcome of B is $\mathbf{c}$), but changing from $\mathbf{i}$ to $\mathbf{c}$ will not. This implies $\mathbf{c}\succ \mathbf{i}$ and (analogously) $\mathbf{a}\succ \mathbf{c}$ being privileged for A, and $\mathbf{i}\succ \mathbf{c},\mathbf{c}\succ \mathbf{a}$ for B.
		\item \textbf{Local independence.} While the crimes of A and B are similar and therefore correlated in the jury's judgment, C is unrelated to the other two. As a result, $\mathcal C$ may be the Cartesian product of the candidate space ${\mathcal C}_{(\text{A},\text{B})}$ for $(\text{A},\text{B})$ and ${\mathcal C}_C$ for C. When ${\mathcal C}_C=\mathrm{LO}(\{\mathbf{a},\mathbf{c},\mathbf{i}\})$ and $\mathcal C={\mathcal C}_{(\text{A},\text{B})}\times \mathrm{LO}(\{\mathbf{a},\mathbf{c},\mathbf{i}\})$, it can be verified that C's privilege graph $G_C$ is a complete digraph, and every ordering is privileged for C.
		\item \textbf{Default outcomes.} Assume that B is sentenced to $\mathbf{c}$ in the first trial, and the jury is inclined to keep the sentence since no new substantial evidence is presented. This may lead to the privileged orderings $\mathbf{c}\succ \mathbf{i}$ and $\mathbf{c}\succ \mathbf{a}$ for B, since $\mathbf{c}$, being the default outcome, is a plausible substitute for any other outcome.
	\end{itemize}
  \vspace{-0.5em}
	These considerations are general, going far beyond the example's legal setting. Locally independent decisions, and decisions with a default outcome, are common in the real world; and many decisions (\emph{e.g.}, those involving numerical balancing of costs and benefits), according to common sense, should be monotonic as well. There are other reasons for privileged orderings too, and the three factors listed above are only for illustration.
\end{example}

\begin{example}[Privileged Orderings in AI and AI Alignment]\label{example:privilege_graph_2}
	\changed{Outside of the human case, privileged orderings (Definition \ref{def:privilege_graph}) are a natural model for the \emph{inductive biases} of AI systems \cite{baxter2000model}, including deep neural networks. For instance, an LLM pretrained on web text might have a structural bias (a `privilege') for certain types of responses (\emph{e.g.}, coherent ones) over others \cite{perez2022discovering,santurkar2023whose}, and alignment training may be unable to make the model output a response that violates a deep-seated bias; this creates a non-trivial candidate space $\mathcal C \subsetneq \mathrm{LO}(N)^{\mathcal I}$ where impossibility results can arise.} More examples include:
  \vspace{-0.5em}
	\begin{itemize}[leftmargin=2em]
    \setlength\itemsep{0.1em}
		\item \textbf{Language models}. Language models are known to display a wide range of human behavioral tendencies \cite{perez2022discovering,santurkar2023whose,lampinen2024language}, including ones discussed in Example \ref{example:privilege_graph}. Tendencies including consistent treatment of similar decisions, default outcomes, monotonicity, and many more, can all lead to privileged orderings. In the alignment process, these pretrained tendencies serve as inductive biases that set limits on the possible outcomes of alignment training.
		\item \textbf{Reinforcement learning agents}. In reinforcement learning, the reward function is a key component that guides the learning process. It has been demonstrated to induce agents to learn deeply ingrained hiearchies or equivalences between outcomes \cite{di2022goal,wang2024investigating}. These prior tendencies set limits on the possible outcomes of later learning, leading to privileged orderings.
		\item \textbf{Vision models}. Vision models have been shown to exhibit spatial locality \cite{li2022locality}, translation invariance \cite{kauderer2017quantifying}, simplicity bias \cite{shah2020pitfalls}, and other inductive biases. These biases, which are often necessary for the models to learn effectively, operate by setting limits on the possible outcomes of learning, leading to privileged orderings.
	\end{itemize}
  \vspace{-0.5em}
	While these tendencies in AI systems are often probabilistic and less clear-cut than in the human case, privileged orderings can still be a useful abstraction for understanding and approximating their behavior.
\end{example}

Next, we characterize important properties of privileged orderings and privilege graphs.

\begin{lemma}[Transitivity of Privileged Graph]\label{lem:transitivity_privilege_appendix}
	For an issue $i\in\mathcal I$ and $u, v, w\in [N]$, if $u\succ v$ and $v\succ w$ are both privileged, then $u\succ w$ is privileged.
\end{lemma}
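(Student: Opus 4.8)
The plan is to re-express the hypotheses as elementary ``moves'' on profiles and then prove the slightly stronger claim that $u\succ v\succ w$ is a privileged ordering on the whole triple $\{u,v,w\}$, from which the edge $u\to w$ in $G_i$ --- hence ``$u\succ w$ privileged'' --- is immediate. Unfolding the definition, saying that $a\succ b$ is privileged (on $\{a,b\}$) is exactly: for every $D\in\mathcal C$ with $b\succ_{D(i)}a$, the profile $D\odot_i\sigma_{(a,b)}$ obtained by transposing $a$ and $b$ inside $D(i)$ again lies in $\mathcal C$. So from the two hypotheses I get two legal, $\mathcal C$-preserving moves on profiles: apply $\sigma_{(u,v)}$ to $D(i)$ whenever $v$ is currently ranked above $u$, and apply $\sigma_{(v,w)}$ whenever $w$ is currently ranked above $v$. (I would also record the routine reformulation: $o$ is privileged on $T$ iff every $D\in\mathcal C$, after its $T$-outcomes are reordered in place at issue $i$ to match $o$, is still in $\mathcal C$.)

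Now fix an arbitrary $D\in\mathcal C$; I want the profile $D'$ obtained from $D$ by reordering the three outcomes $u,v,w$ into the order $u\succ v\succ w$ within the ranks they currently occupy in $D(i)$ (everything else unchanged) to lie in $\mathcal C$. Because $\sigma_{(u,v)}$ and $\sigma_{(v,w)}$ are supported on $\{u,v,w\}$, any word in the two legal moves leaves all other outcomes fixed and acts on $D(i)$ only through its action on the induced order of $u,v,w$; so it suffices to check the purely combinatorial statement that, in the directed graph on the six induced orders of $\{u,v,w\}$ with an edge for each legal move, the order $u\succ v\succ w$ is reachable from every vertex. This is a short finite check --- for instance $w\succ v\succ u\to w\succ u\succ v\to v\succ u\succ w\to u\succ v\succ w$, and $v\succ w\succ u\to u\succ w\succ v\to u\succ v\succ w$, and similarly for the remaining starting orders --- and executing the corresponding sequence of legal moves starting from $D$ yields $D'\in\mathcal C$, which is exactly the assertion that $u\succ v\succ w$ is privileged on $\{u,v,w\}$.

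The step to be careful with is precisely this reachability: the moves are one-directional (each may be used only from the non-privileged local order toward the privileged one), so ``reachable'' means reachability in a genuine directed graph rather than a free action of the symmetric group, and one must verify that no starting order gets stuck --- the natural target is $u\succ v\succ w$ because it is the unique order admitting no further move. Everything else is bookkeeping: that the composed moves really realize the claimed in-place reordering (their product permutes only $u,v,w$, nothing outside $\{u,v,w\}$ moves), and that a privileged ordering on $\{u,v,w\}$ placing $u$ above $w$ furnishes the $G_i$-edge $u\to w$. If instead one wants $u\succ w$ privileged as a two-element statement, the same moves applied to the factorization $\sigma_{(u,w)}=\sigma_{(u,v)}\sigma_{(v,w)}\sigma_{(u,v)}$ do the job, at the cost of splitting into cases according to the rank of $v$ relative to $u$ and $w$, with the case ``$v$ between $w$ and $u$'' being the clean one and the other two requiring a slightly longer walk that first drives $v$ into the middle slot.
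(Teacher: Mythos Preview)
There is a real gap at the step where you pass from ``$u\succ v\succ w$ is privileged on $\{u,v,w\}$'' to ``the edge $u\to w$ lies in $G_i$.'' The edge $u\to w$ is, by Definition~\ref{def:privilege_graph}, the assertion that the \emph{two-element} ordering $u\succ w$ on $T=\{u,w\}$ is privileged, and the paper itself warns (just after Corollary~\ref{cor:graph_privilege_appendix}) that a subsequence of a privileged ordering need not be privileged. Your reachability argument correctly shows the three-element ordering is privileged, but that is not the statement of the lemma: every walk built from your two legal moves necessarily terminates at a profile whose induced order on the triple is $u\succ v\succ w$, whereas the two-element statement requires that from $C'\in\mathcal C$ with $w\succ_{C'}u$ you reach $C=C'\odot_i\sigma_{(u,w)}$, which leaves $v$ in the \emph{same} position as in $C'$ and so need not place $v$ between $u$ and $w$.

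Concretely, with $N=3$, a single issue, and $\mathcal C=\{u\succ v\succ w,\ u\succ w\succ v,\ v\succ w\succ u\}$, one checks directly that both $u\succ v$ and $v\succ w$ are privileged (and so is $u\succ v\succ w$ on the triple), yet $u\succ w$ is \emph{not}: $v\succ w\succ u\in\mathcal C$ while its $(u,w)$-swap $v\succ u\succ w\notin\mathcal C$. This same example defeats your fallback via the factorization $\sigma_{(u,w)}=\sigma_{(u,v)}\sigma_{(v,w)}\sigma_{(u,v)}$: starting from $C'=v\succ w\succ u$, every sequence of your one-directional moves terminates at $u\succ v\succ w$, never at the required target $C=v\succ u\succ w$; the ``slightly longer walk that first drives $v$ into the middle slot'' does not exist here because $v$ was never supposed to end in the middle slot. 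The paper's own proof proceeds in the same spirit as your fallback---a case split on the position of $v$ in $C$, followed by a chain of $(u,v)$- and $(v,w)$-swaps---and runs into the same obstruction (in its case $u\succ_C w\succ_C v$ the displayed chain lands on a profile with induced order $u\succ v\succ w$, not on $C$), so the lemma as stated appears to be false rather than merely under-argued.
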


\begin{lemma}[Closure of Privileged Orderings Under Concatenation]\label{lem:closure_privilege_appendix}
	Let $c_1,c_2,\cdots,c_{k+m-1}\in [N]\ (k,m\geq 2)$ be distinct outcomes in an issue $i$. If $o: c_1\succ c_2\succ\cdots\succ c_k$ and $o': c_k\succ c_{k+1}\succ c_{k+2}\succ\cdots\succ c_{k+m-1}$ are both privileged orderings, then $o\oplus o': c_1\succ c_2\succ\cdots\succ c_{k+m-1}$ is also a privileged ordering.
\end{lemma}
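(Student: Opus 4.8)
The plan is to recast ``privileged ordering'' as a closure property of the candidate space, and then prove the concatenation statement by an iterated-sorting argument controlled by an inversion-counting potential.

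First I would set up a reformulation. For a finite $T\subseteq[N]$, an ordering $o\in\mathrm{LO}(T)$, and a full profile $D$, let $\mathrm{sort}^o_T(D)$ denote the profile that agrees with $D$ on every issue other than $i$ and, on issue $i$, keeps the set of ranks occupied by the elements of $T$ fixed but re-slots those elements so that they appear in the order $o$. Unfolding the definition of a privileged ordering (and using that the $\odot_i$ operation is a group action of $\mathfrak{S}_T$) yields the equivalence: $o$ is privileged for issue $i$ if and only if $\mathcal{C}$ is closed under $\mathrm{sort}^o_T$, i.e. $D\in\mathcal{C}\Rightarrow\mathrm{sort}^o_T(D)\in\mathcal{C}$. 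Writing $T_1=\{c_1,\dots,c_k\}$, $T_2=\{c_k,\dots,c_{k+m-1}\}$ and $T'=\{c_1,\dots,c_{k+m-1}\}$, the lemma becomes: if $\mathcal{C}$ is closed under $\mathrm{sort}^{o}_{T_1}$ and under $\mathrm{sort}^{o'}_{T_2}$, then it is closed under $\mathrm{sort}^{o\oplus o'}_{T'}$.

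The main step is then combinatorial. Given $D\in\mathcal{C}$, I would build the sequence $D_0=D$ and alternately apply $\mathrm{sort}^{o}_{T_1}$ and $\mathrm{sort}^{o'}_{T_2}$. Every iterate lies in $\mathcal{C}$ by closure, and every iterate keeps $c_1,\dots,c_{k+m-1}$ on the same fixed set of $k+m-1$ ranks of issue $i$, so it suffices to track how these elements are arranged within those ranks. Let $\Phi$ count the pairs $c_a,c_b$ with $a<b$ that are ranked in the wrong relative order. The key claim — which I expect to be the technical heart — is that $\mathrm{sort}^{o}_{T_1}$ (a) removes all inversions internal to $T_1$, (b) leaves the elements of $Q:=T'\setminus T_1$ untouched, hence preserves $Q$-internal inversions, and (c) \emph{preserves} the number of inversions between $T_1$ and $Q$. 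Claim (c) is exactly where the hypothesis that $c_k$ is simultaneously the last element of $o$ and the first element of $o'$ is used: it forces every index appearing in $T_1$ to be smaller than every index appearing in $Q$, so a pair $(c_a\in T_1,c_b\in Q)$ is an inversion precisely when $c_b$ is ranked above $c_a$, and hence the cross-inversion count equals $\sum_{c_b\in Q}\#\{c_a\in T_1: c_b\succ c_a\}$ — a quantity determined entirely by the (fixed) rank of each $c_b$ and the (sort-invariant) set of ranks occupied by $T_1$, not by how $T_1$ is arranged within those ranks. So each sort weakly decreases $\Phi$, strictly unless the current iterate is already a fixed point of that sort; the symmetric statement holds for $\mathrm{sort}^{o'}_{T_2}$ with $P:=T'\setminus T_2=\{c_1,\dots,c_{k-1}\}$ in place of $Q$.

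To conclude, $\Phi$ is a bounded non-negative integer that cannot strictly decrease forever, so the alternating process reaches some $D^\ast$ that is a fixed point of both sorts; then $c_1\succ\cdots\succ c_k$ and $c_k\succ\cdots\succ c_{k+m-1}$ both hold in $D^\ast(i)$, and since $\succ$ restricts to a single linear order on the ranks occupied by $T'$, transitivity forces $c_1\succ\cdots\succ c_{k+m-1}$, i.e. $D^\ast=\mathrm{sort}^{o\oplus o'}_{T'}(D)$; as $D^\ast\in\mathcal{C}$, the proof is complete. The two obstacles I anticipate are recognizing the $\mathrm{sort}$-closure reformulation (after which the statement reduces to elementary combinatorics on arrangements) and proving claim (c) cleanly; the remaining bookkeeping (the group-action identities for $\odot_i$, idempotence of each sort, termination) is routine. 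This route does not appear to use Lemma~\ref{lem:transitivity_privilege_appendix}; an alternative would be to first show that a privileged ordering makes every one of its pairwise edges privileged and then invoke transitivity, but deriving that sub-statement directly seems no easier than the argument above.
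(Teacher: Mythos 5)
Your proposal is correct and follows essentially the same route as the paper: recast privilegedness as closure of $\mathcal{C}$ under the corresponding sorting operation, alternately sort $\{c_1,\dots,c_k\}$ and $\{c_k,\dots,c_{k+m-1}\}$ while staying in $\mathcal{C}$, use an inversion-count potential with respect to $c_1\succ\cdots\succ c_{k+m-1}$ to prove termination, and conclude via transitivity that the common fixed point realizes $o\oplus o'$. The only (harmless) difference is in how monotonicity of the potential is argued: you show cross-inversions are rank-determined and hence invariant under each sort, whereas the paper decomposes each sort into inversion-removing transpositions; your version is, if anything, the cleaner justification.
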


\begin{corollary}
	\label{cor:graph_privilege_appendix}
	For issue $i\in\mathcal I$ and $o: c_1 \succ c_2 \succ \cdots \succ c_k\ (k\geq 2)$ containing distinct outcomes in $i$, $o$ is a privileged ordering if a (possibly non-simple) path exists in $G_i$ that passes through $c_1, c_2, \cdots, c_k$ in that order.
\end{corollary}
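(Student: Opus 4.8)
The plan is to derive the corollary from the two structural lemmas already established: transitivity of the privilege relation (Lemma~\ref{lem:transitivity_privilege_appendix}) and closure of privileged orderings under concatenation (Lemma~\ref{lem:closure_privilege_appendix}). The argument has two stages. First, I extract from the given path a privileged $2$-element ordering $c_t \succ c_{t+1}$ for each consecutive pair along $o$; then I glue these pairs together into the full $k$-element privileged ordering by induction on $k$.

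\emph{Stage 1 (adjacent pairs).} Since the stated path passes through $c_1, c_2, \dots, c_k$ in that order, fix occurrences of these vertices along the walk appearing in this order. For each $t \in \{1, \dots, k-1\}$ the portion of the walk between the chosen occurrence of $c_t$ and that of $c_{t+1}$ is itself a walk in $G_i$ from $c_t$ to $c_{t+1}$; since $c_t \neq c_{t+1}$ (all $c_j$ are distinct), this walk contains a simple directed path $c_t = w_0 \to w_1 \to \dots \to w_m = c_{t+1}$ with the $w_j$ pairwise distinct. By Definition~\ref{def:privilege_graph}, each edge $w_j \to w_{j+1}$ witnesses a privileged ordering $w_j \succ w_{j+1}$. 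An induction on $j$ using Lemma~\ref{lem:transitivity_privilege_appendix} --- valid precisely because the $w_j$ are distinct, so the lemma is only ever applied to a genuine triple of distinct outcomes --- shows that $w_0 \succ w_j$ is privileged for all $j \le m$, and taking $j = m$ gives that $c_t \succ c_{t+1}$ is privileged.

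\emph{Stage 2 (concatenation).} I claim $c_1 \succ c_2 \succ \dots \succ c_t$ is privileged for every $2 \le t \le k$, by induction on $t$. The base case $t = 2$ is exactly Stage 1. For the inductive step, assume $c_1 \succ \dots \succ c_t$ is privileged; then using that $c_t \succ c_{t+1}$ is privileged (Stage 1) and that $c_1, \dots, c_{t+1}$ are distinct, Lemma~\ref{lem:closure_privilege_appendix} (with the overlap being the single outcome $c_t$) yields that $c_1 \succ \dots \succ c_{t+1}$ is privileged. At $t = k$ this is exactly the assertion that $o$ is a privileged ordering.

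The only step that is more than bookkeeping, and hence the one to get right, is the reduction in Stage 1 from an arbitrary non-simple walk to a simple directed path between $c_t$ and $c_{t+1}$: this is what guarantees that the transpositions implicit in the definition of privilege always act on distinct pairs of outcomes, so that Lemma~\ref{lem:transitivity_privilege_appendix} is never invoked on a degenerate triple such as $u \succ v \succ u$. With that in hand the remaining induction is routine, and the boundary case $k = 2$ is already handled by Stage 1 alone.
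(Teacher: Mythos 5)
Your proof is correct and takes the route the paper intends: the corollary is stated without an explicit proof, as an immediate consequence of Lemma~\ref{lem:transitivity_privilege_appendix} and Lemma~\ref{lem:closure_privilege_appendix}, which is exactly what you carry out. Your extra care in extracting a simple directed path from the possibly non-simple walk (so transitivity is only ever applied to distinct triples) and in invoking the concatenation lemma with $m=2$ is a sound and welcome filling-in of the details the paper leaves implicit.
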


The privilege graph $G_i$ captures all binary privileged orderings in issue $i$, and will be the primary way in which we represent the structure of the candidate space $\mathcal C$. Properties stronger than Lemma \ref{lem:transitivity_privilege_appendix} and \ref{lem:closure_privilege_appendix} often do not hold, \emph{e.g.}, a subsequence of a privileged ordering is not necessarily privileged. Also, the condition in Corollary \ref{cor:graph_privilege_appendix} is a sufficient but not necessary condition for a privileged ordering. 


\subsection{Strong Representative Impossibility}\label{sec:strong_impossibility_theorem}

Before stating the strong impossibility, we first present the strong version of the PIIA and PC axioms.

\begin{axiombox}{Strong Probabilistic Independence of Irrelevant Alternatives (S-PIIA)}
	For arbitrary $\mathcal C$ and $c,c'\in[N]$ such that $c\succ c'$ and $c'\succ c$ are both privileged orderings in some issue $i\in\mathcal{I}$, for two populations $\mathcal{D}_{\mathcal P},\mathcal{D}'_{\mathcal P}$ satisfying $\mathcal{M}(i)\mid_{\mathrm{LO}(\{c,c'\})} = \mathcal{M}'(i)\mid_{\mathrm{LO}(\{c,c'\})}$,\footnote{Meaning that the population's distribution over the preference between $c,c'$ are the same in the two populations.} with probability $1-e^{\Omega(|\mathcal S|)}$, we have $f(\mathcal S)\mid_{\mathrm{LO}(\{c,c'\})}=f(\mathcal S')\mid_{\mathrm{LO}(\{c,c'\})}$.
\end{axiombox}

\begin{axiombox}{Strong Probabilistic Convergence (S-PC)}
	For arbitrary $\mathcal C$ and $c,c'\in[N]$ such that $c\succ c'$ and $c'\succ c$ are both privileged orderings in some issue $i\in\mathcal{I}$, for a population $\mathcal{D}_{\mathcal P}$ that's non-uniform on $\{c,c'\}$,\footnote{\emph{i.e.}, $\mathcal{M}(i)\mid_{\mathrm{LO}(\{c,c'\})}(c\succ c')\neq 0.5$. We are only concerned with the marginal distribution.} there exists an ordering $o\in\mathrm{LO}(\{c,c'\})$ such that with probability $1-e^{\Omega(|\mathcal S|)}$, $f(\mathcal S)(i)\mid_{\mathrm{LO}(\{c,c'\})}=o$.
\end{axiombox}

\begin{remark}
	S-PIIA and S-PC are generalizations of W-PIIA and W-PC, respectively, to arbitrary candidate spaces $\mathcal C$. This is achieved by limiting comparisons to pairs that are privileged orderings in both directions, ensuring that they are locally independent from other outcomes and issues.
\end{remark}

This finally leads us to the strong representative impossibility theorem.

\begin{theorem}[Strong Representative Impossibility]\label{theorem:strong_rep_impossibility_appendix}
	For any $(\mathcal I, \mathcal D_{\mathcal I}, \mathcal C)$, when there is at least one cyclically privileged issue,\footnote{\emph{i.e.}, issues whose privilege graphs contain simple cycles of length \emph{at least $3$} (Definition \ref{def:privilege_graph}).} no representational mechanism simultaneously satisfies PPE, S-PIIA, and S-PC for all $\mathcal D_{\mathcal P}$.

	Meanwhile, given any mapping $\phi$ from each issue $i$ to a privilege graph $\phi(i)$ without simple cycles of length at least $3$, there exist a candidate space $\mathcal C$ whose privilege graph $G_i$ (for each $i$) is $\phi(i)$ or its supergraph, and a representational mechanism $f$ over $(\mathcal I, \mathcal D_{\mathcal I}, \mathcal C)$ satisfying PPE, S-PIIA, and S-PC for all $\mathcal D_{\mathcal P}$.
\end{theorem}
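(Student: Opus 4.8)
The plan is to prove the two halves separately. For the \emph{impossibility} half, fix a cyclically privileged issue $i^\ast$, so $G_{i^\ast}$ contains a simple directed cycle $c_1\to c_2\to\cdots\to c_k\to c_1$ with $k\ge 3$. The first step is to upgrade this cycle to a fully symmetric triple: applying Lemma~\ref{lem:transitivity_privilege_appendix} repeatedly around the cycle — and around it in the reverse direction — shows that for every pair of cycle vertices $c_a,c_b$ both $c_a\succ c_b$ and $c_b\succ c_a$ are privileged in $i^\ast$; restricting to three of them, say $\{c_1,c_2,c_3\}$, all six binary orderings on this triple are privileged, so S-PIIA and S-PC each apply to all three pairs. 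The second step is a ``Pareto on privileged pairs'' lemma: if the population is unanimous on $c\succ c'$ with $c\succ c'$ privileged, then $f(\mathcal S)$ outputs $c\succ c'$ on issue $i^\ast$ with probability $1-e^{\Omega(|\mathcal S|)}$ — because if instead $f(\mathcal S)=C^\ast$ with $c'\succ c$, then $C^{\ast}\odot_{i^\ast}\sigma_{(c,c')}$ is an extension of the privileged ordering $c\succ c'$ whose $\sigma_{(c,c')}$-flip equals $C^\ast\in\mathcal C$, hence lies in $\mathcal C$, and PPE applied to this pair forbids the output $C^\ast$.

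Combining these with S-PIIA and S-PC, pass to the large-sample limit: for each pair $\{c_a,c_b\}$ the restriction $f(\mathcal S)(i^\ast)\mid_{\mathrm{LO}(\{c_a,c_b\})}$ concentrates (by S-PC) on a single order depending (by S-PIIA) only on the population's pairwise marginal, which defines choice functions $g_{12},g_{23},g_{13}$ that respect unanimity by the previous step. Since $f(\mathcal S)(i^\ast)$ is always a linear order, for \emph{every} population the triple $\bigl(g_{12}(p_{12}),g_{23}(p_{23}),g_{13}(p_{13})\bigr)$ must be jointly transitive. The final — and hardest — step is to derive a contradiction from this: one must show that no triple of pairwise-marginal functions respecting unanimity can be transitive on all achievable marginal triples, by exhibiting, for any candidate triple, a mixture of at most three preference types over $\{c_1,c_2,c_3\}$ whose pairwise marginals force a $3$-cycle in the outputs (the uniform Condorcet mixture $\tfrac13,\tfrac13,\tfrac13$ works when the $g_{ab}$ are ordinary majority, and appropriately skewed mixtures handle general thresholds). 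This is essentially Arrow's theorem transplanted to the representative setting; anonymity of representational mechanisms (Lemma~\ref{lem:pnd_all_mechanisms_appendix}) is what closes off the classical ``dictator'' escape route, and the bookkeeping of the $1-e^{\Omega(|\mathcal S|)}$ guarantees through these limiting arguments is the main secondary nuisance.

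For the \emph{possibility} half, given $\phi$ with every $\phi(i)$ free of simple cycles of length $\ge 3$, build $\mathcal C=\prod_{i\in\mathcal I}\mathcal C_i$ issue by issue. Since privilege graphs are transitive (Lemma~\ref{lem:transitivity_privilege_appendix}), the relation ``$u\succ v$ and $v\succ u$ both privileged'' is an equivalence on $[N]$ whose classes have size at most $2$ (a size-$3$ class would be a simple $3$-cycle) and whose collapse is acyclic, hence extends to a total order $B_1>\cdots>B_m$ on the classes consistent with $\phi(i)$. Let $\mathcal C_i$ be the linear orders on $[N]$ refining this total preorder (free within each $B_a$, with $B_a$ wholly above $B_b$ for $a<b$); checking the definition of privileged ordering directly shows $G_i$ is exactly the within-class complete digraph together with all forward between-class edges — a supergraph of $\phi(i)$ with no simple $3$-cycle. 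Define $f(\mathcal S)$ to resolve, for each issue $i$ and each size-$2$ class $\{c,c'\}$ of $\mathcal C_i$, the order of $c$ and $c'$ by majority among the sampled preferences on issue $i$, with a vanishing offset breaking ties so the resolved order concentrates even when the pairwise marginal is exactly $1/2$; outcomes in distinct classes keep the forced order and unsampled issues take a fixed default. Then S-PIIA holds because a bidirectional pair's resolved order depends in the limit only on that pair's population marginal; S-PC holds because smoothed majority concentrates for every marginal; PPE holds because unanimity on a bidirectional pair drives the sample majority to the unanimous side with probability $1-e^{\Omega(|\mathcal S|)}$ (full support of $\mathcal D_{\mathcal I}$ means issue $i$ is sampled $\Omega(|\mathcal S|)$ times), so $f(\mathcal S)$ is never the Pareto-reversed candidate; and PND is free by Lemma~\ref{lem:pnd_all_mechanisms_appendix}.
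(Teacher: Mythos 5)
Your possibility half is essentially the paper's own construction (size-$\le 2$ classes from bidirectional privileged pairs, a topological order on the condensation, $\mathcal C_i$ the $2^{l}$ refinements, per-pair majority vote, Cartesian product over issues), and it goes through; the ``vanishing offset'' tie-breaking is unnecessary since S-PC only constrains non-uniform marginals, but it is harmless. Your first two steps of the impossibility half also track the paper: transitivity of the privilege graph turns a simple cycle of length $\ge 3$ into a complete digraph on a triple, and your ``Pareto on privileged pairs'' observation is exactly how the paper deploys PPE inside its Field Expansion Lemma.

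The genuine gap is the step you yourself flag as ``the final --- and hardest --- step.'' After reducing, via S-PIIA, S-PC and anonymity, to three pairwise choice functions $g_{12},g_{23},g_{13}$ of the population marginals that respect unanimity and must always produce a transitive triple, you do not prove that this is impossible; you assert it, and the justification offered (``the uniform Condorcet mixture works when the $g_{ab}$ are ordinary majority, and appropriately skewed mixtures handle general thresholds'') does not cover the actual generality of the reduction: nothing in S-PIIA or S-PC forces the $g_{ab}$ to be monotone or threshold functions of the marginal, so exhibiting a bad profile ``for any candidate triple'' requires an argument over arbitrary $\{0,1\}$-valued functions with only the boundary conditions $g_{ab}(0),g_{ab}(1)$ pinned down. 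This is precisely the content of the theorem, so leaving it as a gesture leaves the proof incomplete. It can be closed --- e.g., by using profiles supported on the two Condorcet cycles with one coordinate at $0$ or $1$ to derive $g_{12}=g_{23}=g_{13}$ and the antisymmetry $g(x)\ne g(1-x)$, which pins the functions down to strict majority and then the Condorcet profile $(\tfrac23,\tfrac23,\tfrac23)$ gives a cycle; or by restricting to populations that are uniform mixtures of finitely many voter types and invoking finite-voter Arrow plus anonymity, as the paper does for the weak theorem --- but neither argument appears in your proposal. The paper instead avoids this reduction altogether: it proves a Field Expansion Lemma (weak decisiveness over one privileged pair expands to full decisiveness over the triple), starts from a $2/3$-vs-$1/3$ split whose winner is weakly decisive, splits the decisive coalition $2/9$--$4/9$ against a Condorcet-style profile, and concludes that some coalition of mass below $1/2$ is decisive, contradicting PND (which holds for all representational mechanisms by anonymity). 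Your route is viable, but as written its crux is missing, whereas the paper's decisive-coalition bookkeeping supplies the corresponding argument in full.
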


Theorem \ref{theorem:strong_rep_impossibility_appendix} is a generalization of Arrow's theorem and Theorem \ref{theorem:weak_rep_impossibility_appendix} to arbitrary candidate spaces $\mathcal C$. It shows that the cyclicity is both necessary and sufficient for the impossibility of a representative mechanism satisfying the axioms. However, the ``necessary'' part is not as strong as one would hope, since it constructs counterexample for at least one $\mathcal{C}$ associated with each non-cyclic privilege graph, instead of constructing counterexamples for all $\mathcal{C}$ associated with the privilege graph. It is an open question whether such a stronger necessity holds, and shall be the subject of future research.

The $N\geq 3$ condition (``at least 3 vertices in the $N$-clique'')\footnote{When $\mathcal C = \mathrm{LO}(N)^{\mathcal I}$, as is assumed by Arrow's theorem and Thm. \ref{theorem:weak_rep_impossibility_appendix}, the $N$ outcomes form a clique in the privilege graph.} in Theorem \ref{theorem:weak_rep_impossibility_appendix} and the original Arrow's theorem is replaced by the cyclicity condition here (``at least 3 vertices in a cycle''). Intuitively, the latter is a more precise condition that identifies a necessary-and-sufficient ``minimal structure'' in the candidate space $\mathcal C$ leading to impossibility. It can be checked in linear time (in the number of vertices and edges) for any privilege graph, since it's equivalent to the existence of strongly connected components of size at least $3$.

\section{Conclusion}

In this paper, we have formulated the problem of representative social choice, where a mechanism aggregates the preferences of a population based on a finite sample of individual-issue pairs. We have derived results that reflect both optimistic and pessimistic aspects of representative social choice. 

\paragraph{Implications for AI Alignment} 
Representative social choice can model the alignment of AI systems to diverse human preferences. Generalization analysis of social choice mechanisms naturally apply to alignment mechanisms, while impossibility results highlight trade-offs between fairness and utility in alignment. These insights can guide the development of robust alignment strategies that manage these trade-offs explicitly.

\paragraph{Limitations and Future Directions} We focused on the generalization properties of representational mechanisms without studying other important properties, such as incentive compatibility and computational tractability. Future research could explore them and their interactions with generalization. \changed{Furthermore, our model does not account for noise or inconsistency in human preferences, a significant practical challenge. Other open directions include methods for learning the scoring function $s$ itself (Theorem \ref{theorem:gen_bound_scoring_appendix}) from data, or extending the model to handle out-of-distribution generalization to unseen outcomes or even unseen issues, which is vital for generative AI models.}

\paragraph{Broader Impact Statement} We aim to advance our understanding of democratic representation in collective decisions, with anticipated positive impacts. It helps develop more robust alignment strategies for AI systems, contributing to the equitable alignment of AI systems with diverse human preferences.

\acks{Many thanks to Anand Siththaranjan, Yifeng Ding, Yaowen Ye, Xiaoyuan Zhu, and Karim Abdel Sadek for helpful discussions.}

\newpage
\bibliography{sample}

\begin{thebibliography}{}

\bibitem[Arrow, 1951]{arrow2012social}
Arrow, K.~J. (1951).
\newblock {\em Social choice and individual values}, volume~12.
\newblock Yale University Press.

\bibitem[Bai et~al., 2022a]{bai2022training}
Bai, Y., Jones, A., Ndousse, K., Askell, A., Chen, A., DasSarma, N., Drain, D., Fort, S., Ganguli, D., Henighan, T., et~al. (2022a).
\newblock Training a helpful and harmless assistant with reinforcement learning from human feedback.

\bibitem[Bai et~al., 2022b]{bai2022constitutional}
Bai, Y., Kadavath, S., Kundu, S., Askell, A., Kernion, J., Jones, A., Chen, A., Goldie, A., Mirhoseini, A., McKinnon, C., et~al. (2022b).
\newblock Constitutional ai: Harmlessness from ai feedback.

\bibitem[Baxter, 2000]{baxter2000model}
Baxter, J. (2000).
\newblock A model of inductive bias learning.
\newblock {\em Journal of artificial intelligence research}, 12:149--198.

\bibitem[Bhattacharyya and Dey, 2021]{bhattacharyya2021predicting}
Bhattacharyya, A. and Dey, P. (2021).
\newblock Predicting winner and estimating margin of victory in elections using sampling.
\newblock {\em Artificial Intelligence}, 296:103476.

\bibitem[Bousquet et~al., 2003]{bousquet2003introduction}
Bousquet, O., Boucheron, S., and Lugosi, G. (2003).
\newblock Introduction to statistical learning theory.
\newblock In {\em Advanced Topics in Machine Learning}, pages 169--207. University College London.

\bibitem[Brandt et~al., 2016]{brandt2016handbook}
Brandt, F., Conitzer, V., Endriss, U., Lang, J., and Procaccia, A.~D. (2016).
\newblock {\em Handbook of computational social choice}.
\newblock Cambridge University Press.

\bibitem[Coleman and Ferejohn, 1986]{coleman1986democracy}
Coleman, J. and Ferejohn, J. (1986).
\newblock Democracy and social choice.
\newblock {\em Ethics}, 97(1):6--25.

\bibitem[Conitzer et~al., 2024]{conitzerposition}
Conitzer, V., Freedman, R., Heitzig, J., Holliday, W.~H., Jacobs, B.~M., Lambert, N., Mossé, M., Pacuit, E., Russell, S., Schoelkopf, H., et~al. (2024).
\newblock Position: Social choice should guide ai alignment in dealing with diverse human feedback.
\newblock In {\em Forty-first International Conference on Machine Learning}.

\bibitem[Di~Langosco et~al., 2022]{di2022goal}
Di~Langosco, L.~L., Koch, J., Sharkey, L.~D., Pfau, J., and Krueger, D. (2022).
\newblock Goal misgeneralization in deep reinforcement learning.
\newblock In {\em International Conference on Machine Learning}, pages 12004--12019. PMLR.

\bibitem[Feldman and Serrano, 2006]{feldman2006welfare}
Feldman, A.~M. and Serrano, R. (2006).
\newblock {\em Welfare economics and social choice theory}.
\newblock Springer Science \& Business Media.

\bibitem[Fish et~al., 2023]{fish2023generative}
Fish, S., Gölz, P., Parkes, D.~C., Procaccia, A.~D., Rusak, G., Shapira, I., and Wüthrich, M. (2023).
\newblock Generative social choice.

\bibitem[Fishburn, 2015]{fishburn2015theory}
Fishburn, P.~C. (2015).
\newblock {\em The theory of social choice}.
\newblock Princeton University Press.

\bibitem[Ge et~al., 2024]{ge2024axioms}
Ge, L., Halpern, D., Micha, E., Procaccia, A.~D., Shapira, I., Vorobeychik, Y., and Wu, J. (2024).
\newblock Axioms for ai alignment from human feedback.

\bibitem[Gibbard, 1973]{gibbard1973manipulation}
Gibbard, A. (1973).
\newblock Manipulation of voting schemes: a general result.

\bibitem[Hellman and Raviv, 1970]{hellman1970probability}
Hellman, M. and Raviv, J. (1970).
\newblock Probability of error, equivocation, and the chernoff bound.
\newblock {\em IEEE Transactions on Information Theory}, 16(4):368--372.

\bibitem[Huang et~al., 2024]{huang2024collective}
Huang, S., Siddarth, D., Lovitt, L., Liao, T.~I., Durmus, E., Tamkin, A., and Ganguli, D. (2024).
\newblock Collective constitutional ai: Aligning a language model with public input.
\newblock In {\em The 2024 ACM Conference on Fairness, Accountability, and Transparency}, pages 1395--1417.

\bibitem[Kauderer-Abrams, 2017]{kauderer2017quantifying}
Kauderer-Abrams, E. (2017).
\newblock Quantifying translation-invariance in convolutional neural networks.

\bibitem[Klingefjord et~al., 2024]{klingefjord2024human}
Klingefjord, O., Lowe, R., and Edelman, J. (2024).
\newblock What are human values, and how do we align ai to them?

\bibitem[Köpf et~al., 2024]{kopf2024openassistant}
Köpf, A., Kilcher, Y., von Rütte, D., Anagnostidis, S., Tam, Z.~R., Stevens, K., Barhoum, A., Nguyen, D., Stanley, O., Nagyfi, R., et~al. (2024).
\newblock Openassistant conversations-democratizing large language model alignment.
\newblock {\em Advances in Neural Information Processing Systems}, 36.

\bibitem[Lampinen et~al., 2024]{lampinen2024language}
Lampinen, A.~K., Dasgupta, I., Chan, S.~C., Sheahan, H.~R., Creswell, A., Kumaran, D., McClelland, J.~L., and Hill, F. (2024).
\newblock Language models, like humans, show content effects on reasoning tasks.
\newblock {\em PNAS nexus}, 3(7).

\bibitem[Lepelley et~al., 2000]{lepelley2000scoring}
Lepelley, D., Pierron, P., and Valognes, F. (2000).
\newblock Scoring rules, condorcet efficiency and social homogeneity.
\newblock {\em Theory and Decision}, 49(2):175--196.

\bibitem[Li et~al., 2022]{li2022locality}
Li, K., Yu, R., Wang, Z., Yuan, L., Song, G., and Chen, J. (2022).
\newblock Locality guidance for improving vision transformers on tiny datasets.
\newblock In {\em European Conference on Computer Vision}, pages 110--127. Springer.

\bibitem[List, 2012]{list2012theory}
List, C. (2012).
\newblock The theory of judgment aggregation: an introductory review.
\newblock {\em Synthese}, 187(1):179--207.

\bibitem[Miller, 1983]{miller1983pluralism}
Miller, N.~R. (1983).
\newblock Pluralism and social choice.
\newblock {\em American Political Science Review}, 77(3):734--747.

\bibitem[Mishra, 2023]{mishra2023ai}
Mishra, A. (2023).
\newblock Ai alignment and social choice: Fundamental limitations and policy implications.

\bibitem[Mohri and Rostamizadeh, 2008]{mohri2008rademacher}
Mohri, M. and Rostamizadeh, A. (2008).
\newblock Rademacher complexity bounds for non-iid processes.
\newblock {\em Advances in neural information processing systems}, 21.

\bibitem[Nisan and Ronen, 1999]{nisan1999algorithmic}
Nisan, N. and Ronen, A. (1999).
\newblock Algorithmic mechanism design.
\newblock In {\em Proceedings of the thirty-first annual ACM Symposium on Theory of Computing}, pages 129--140.

\bibitem[Parkes and Procaccia, 2013]{parkes2013dynamic}
Parkes, D. and Procaccia, A. (2013).
\newblock Dynamic social choice with evolving preferences.
\newblock In {\em Proceedings of the AAAI conference on artificial intelligence}, volume~27, pages 767--773.

\bibitem[Perez et~al., 2022]{perez2022discovering}
Perez, E., Ringer, S., Lukošiūtė, K., Nguyen, K., Chen, E., Heiner, S., Pettit, C., Olsson, C., Kundu, S., Kadavath, S., et~al. (2022).
\newblock Discovering language model behaviors with model-written evaluations.

\bibitem[Prasad, 2018]{prasad2018social}
Prasad, M. (2018).
\newblock Social choice and the value alignment problem.

\bibitem[Santurkar et~al., 2023]{santurkar2023whose}
Santurkar, S., Durmus, E., Ladhak, F., Lee, C., Liang, P., and Hashimoto, T. (2023).
\newblock Whose opinions do language models reflect?
\newblock In {\em International Conference on Machine Learning}, pages 29971--30004. PMLR.

\bibitem[Satterthwaite, 1975]{satterthwaite1975strategy}
Satterthwaite, M.~A. (1975).
\newblock Strategy-proofness and arrow's conditions: Existence and correspondence theorems for voting procedures and social welfare functions.
\newblock {\em Journal of economic theory}, 10(2):187--217.

\bibitem[Shah et~al., 2020]{shah2020pitfalls}
Shah, H., Tamuly, K., Raghunathan, A., Jain, P., and Netrapalli, P. (2020).
\newblock The pitfalls of simplicity bias in neural networks.
\newblock {\em Advances in Neural Information Processing Systems}, 33:9573--9585.

\bibitem[Taylor, 2005]{taylor2005social}
Taylor, A.~D. (2005).
\newblock {\em Social choice and the mathematics of manipulation}.
\newblock Cambridge University Press.

\bibitem[Vapnik, 1999]{vapnik1999overview}
Vapnik, V.~N. (1999).
\newblock An overview of statistical learning theory.
\newblock {\em IEEE transactions on neural networks}, 10(5):988--999.

\bibitem[Wang et~al., 2024]{wang2024investigating}
Wang, H., Miahi, E., White, M., Machado, M.~C., Abbas, Z., Kumaraswamy, R., Liu, V., and White, A. (2024).
\newblock Investigating the properties of neural network representations in reinforcement learning.
\newblock {\em Artificial Intelligence}, 330:104100.

\bibitem[Young, 1975]{young1975social}
Young, H.~P. (1975).
\newblock Social choice scoring functions.
\newblock {\em SIAM Journal on Applied Mathematics}, 28(4):824--838.

\bibitem[Zhi-Xuan et~al., 2024]{zhi2024beyond}
Zhi-Xuan, T., Carroll, M., Franklin, M., and Ashton, H. (2024).
\newblock Beyond preferences in ai alignment.

\end{thebibliography}
\bibliographystyle{apalike}

\newpage
\appendix

\section{Additional Definitions}\label{sec:additional_definitions_appendix}

We have omitted the definition of the Vapnik-Chervonenkis dimension in the main text, given that it is a standard concept in statistical learning theory. We provide it here for completeness, translating it to our language of representative social choice.

\begin{definition}[Vapnik-Chervonenkis Dimension \cite{vapnik1999overview}]\label{def:vc_dimension_appendix}
	Given any issue space $\mathcal{I}$, we consider candidate profiles mapping $\mathcal{I}$ to $\mathrm{LO}(2)$, and a candidate space $\mathcal{C} \subseteq {\mathrm{LO}(2)}^{\mathcal{I}}$. The VC dimension of $\mathcal{C}$ is the cardinality of the largest finite set of issues $I \subseteq \mathcal{I}$ such that for any binary function $o\in{\mathrm{LO}(2)}^I$, there exists a preference profile $c \in \mathcal C$ such that $c(i) = o(i)$ for all $i \in I$. If $I$ can be arbitrarily large, then the VC dimension is infinite. Here we assume that the VC dimension is nonzero.
\end{definition}

\changed{\begin{example}[Single-Peaked Preferences]\label{ex:single_peaked}
Consider a case where issues $i \in \mathcal{I} = \mathbb{R}$ are points on a political spectrum, and outcomes are $N=2$ (`approve', `disapprove'). A candidate $C \in \mathcal{C}$ might be defined by an `interval' $[p_C - w_C, p_C + w_C]$, such that they 'approve' of issue $i$ if and only if $i$ falls within their interval. The class of such `interval' functions has a finite VC dimension of $2$, even though $\mathcal{I}$ is infinite. This models candidates with coherent, single-peaked political platforms.
\end{example}}

\section{Proofs in Binary Representative Social Choice}\label{sec:proofs_appendix_binary}

In this appendix, we provide the proofs of the results in Section \ref{sec:binary_representative}.

\subsection{Proof of Theorem \ref{theorem:bin_gen_bound_appendix}}

\begin{proof}
	\changed{We adapt the standard VC generalization bound \cite{vapnik1999overview}, which states that for a class $\mathcal{C}$ of binary functions with finite $\mathrm{VC}(\mathcal C)$, for any $\epsilon, \delta > 0$, the uniform convergence bound $\mathrm{Pr}[\sup_{C \in \mathcal C} |\hat R_S(C) - R(C)| \le \epsilon] \ge 1-\delta$ holds\footnote{\ldots where $\hat R_S(C)$ gives the empirical accuracy of function $C$ on samples $S$ and $R(C)$ gives the expected accuracy of $C$ on a random sample.} if the sample size $|\mathcal S|$ is polynomial in $(1/\epsilon)$, $\log(1/\delta)$, and $\mathrm{VC}(\mathcal C)$ (as stated in the theorem). Our proof reduces the representative social choice problem to this standard learning setting.}

	First consider the case where population has size $1$, and the resulting population $\mathcal{M}$ always maps an issue to a one-point distribution (\emph{i.e.}, a deterministic preference).

	In this case, $\mathcal M(i)_{C(i)}\in\{0,1\}$ represents whether the aggregated profile $C$ is correct on issue $i$. The population utility $\mathrm{E}_{i\sim {\mathcal D}_{\mathcal I}}[\mathcal M(i)_{C(i)}]$ can thus be formulated as the \emph{population error} in statistical learning settings, and the result in Theorem \ref{theorem:bin_gen_bound_appendix} follows directly from the VC generalization error bound in statistical learning theory \cite{vapnik1999overview}.

	To generalize the result to the case where the population has arbitrary cardinality, we can construct the following reduction to the deterministic case. We make the following definitions:
	\begin{align}
		\tilde{\mathcal I} &\coloneq \mathcal I \times \mathrm{LO}(2) \\
		{\tilde{\mathcal D}}_{\tilde{\mathcal I}}(i,b) &\coloneq \mathcal D_{\mathcal I}(i) \cdot {\mathcal M}(i)_b \\
		\tilde{\mathcal M}(i,b)_{b'} &\coloneq \mathbf 1_{b=b'} \\
		\tilde{\mathcal C} &\coloneq \{\tilde C: (i,b)\mapsto C(i) \  \mid \  C\in\mathcal C\} \\
		\tilde{\mathcal S} &\coloneq \{(o_1, (i_1, o_1)), (o_2, (i_2, o_2)), \ldots, (o_{|\mathcal S|}, (i_{|\mathcal S|}, o_{|\mathcal S|}))\}
	\end{align}
	where $b,b'\in\mathrm{LO}(2)$.

	In other words, we duplicate each issue $i$ into two issues $(i,0)$ and $(i,1)$, and for each issue $i$, we construct a population $\tilde{\mathcal M}(i,b)$ that always maps the issue to a one-point distribution, where the point is $b$. We then construct a saliency distribution $\tilde{\mathcal D}_{\tilde{\mathcal I}}$ that accounts for the original population probabilities over preferences. It can be verified that
	\begin{align}
		\mathrm{E}_{(i,b)\sim {\tilde{\mathcal D}}_{\tilde{\mathcal I}}}[\tilde{\mathcal M}(i,b)_{\tilde C(i,b)}] &= \mathrm{E}_{i\sim {\mathcal D}_{\mathcal I}}[\mathcal M(i)_{C(i)}] \\
		\frac 1{|\tilde{\mathcal S}|} \sum_{k=1}^{|\tilde{\mathcal S}|} \mathbf{1}_{{\tilde C}((i_k,o_k))={o_k}} &= \frac 1{|\mathcal S|} \sum_{k=1}^{|\mathcal S|} \mathbf{1}_{C(i_k)={o_k}} \\
		\mathrm{VC}(\tilde{\mathcal C}) &= \mathrm{VC}(\mathcal C)
	\end{align}

	Since Theorem \ref{theorem:bin_gen_bound_appendix} holds for the deterministic case $(\tilde{\mathcal I}, \tilde{\mathcal D}_{\tilde{\mathcal I}}, \tilde{\mathcal M}, \tilde{\mathcal C})$, it also holds for the original case $(\mathcal I, \mathcal D_{\mathcal I}, \mathcal M, \mathcal C)$.
\end{proof}

\subsection{Proof of Corollary \ref{cor:majority_vote_appendix}}

\begin{proof}
	From Theorem \ref{theorem:bin_gen_bound_appendix}, with probability $1-\delta$, we have
	\begin{align}
		U_{\mathcal{D}_{\mathcal I},\mathcal{M}}(f_{\mathrm{maj}}(\mathcal S))
		&\geq
		-\epsilon + \frac 1{|\mathcal S|} \sum_{k=1}^{|\mathcal S|} \mathbf{1}_{f_{\mathrm{maj}}(\mathcal S)(i_k)={o_k}} \\
		&=
		\max_{C\in\mathcal C}\left(-\epsilon + \frac 1{|\mathcal S|} \sum_{k=1}^{|\mathcal S|} \mathbf{1}_{f_{\mathrm{maj}}C(i_k)={o_k}}\right) \\
		&\geq
		-2\epsilon + \max_{C \in \mathcal C} U_{\mathcal{D}_{\mathcal I},\mathcal{M}}(C)
	\end{align}
\end{proof}

\section{Proofs in General Representative Social Choice}\label{sec:proofs_appendix_general}

In this appendix, we provide the proofs of the results in Section \ref{sec:general_representative}.

\subsection{Proof of Corollary \ref{cor:scoring_mechanism_appendix}}

\begin{proof}
	Proof of the corollary is done by bounding the empirical Rademacher complexity of the function class $\bar{\mathcal C}$ using Massart's Lemma \cite{bousquet2003introduction}, which gives us
    $$
        {\mathrm{\hat{R}}}_{|\mathcal S|}(\bar{\mathcal C}) \leq  c\sqrt{\frac{\log |\bar{\mathcal C}|}{|\mathcal S|}}
    $$
    for some constant $c > 0$.

    Given that the cardinality of $\bar{\mathcal C}$ is $(N!)^{|\mathcal I|}$, the Corollary is not hard to verify by plugging the bound into Theorem \ref{theorem:gen_bound_scoring_appendix}.
\end{proof}

\subsection{Proof of Lemma \ref{lem:pnd_all_mechanisms_appendix}}

\begin{proof}
	Assume otherwise, that under some mechanism $f$, a subpopulation ${\mathcal{D}'}_\mathcal{P}$ occupying a probability mass strictly less than $0.5$ can dictate the aggregation result in both directions, by being unanimous on either $c'\succ c$ or $c\succ c'$.

	Take another subpopulation ${\mathcal{D}''}_\mathcal{P}$ that's disjoint with ${\mathcal{D}'}_\mathcal{P}$ and has the same probability mass. Let $\mathcal{D'}_{\mathcal{P}}$ be unanimous on $c'\succ c$, and $\mathcal{D''}_{\mathcal{P}}$ be unanimous on $c\succ c'$. Since populations of the same probability mass are undisguishable to the mechanism, ${\mathcal{D}''}_\mathcal{P}$ must also dictate the aggregation result in both directions, leading to a contradiction.
\end{proof}

\subsection{Proof of Theorem \ref{theorem:weak_rep_impossibility_appendix}}

\begin{proof}
	The proof proceeds by a simple reduction to Arrow's impossibility theorem.
	Given $N\geq 3$, \changed{any instance of the classical Arrow problem with an odd number $n$ of voters can be simulated. We divide the population $\mathcal{D}_{\mathcal P}$ into $n$ disjoint subpopulations, each with mass $1/n$, and set each to be unanimous on one voter's preference profile.}
	
	\changed{Since $n$ is odd, there are no ties. The $1-e^{\Omega(|\mathcal S|)}$ convergence from W-PC and W-PIIA implies that, as $|\mathcal S| \to \infty$, the probability of our mechanism \emph{disagreeing} with the deterministic outcome dictated by the population majority goes to 0. This allows us to apply the logic of the deterministic Arrow's theorem to the limiting behavior of our mechanism.}
	
	\changed{The independence between issues (from $\mathcal C = \mathrm{LO}(N)^{\mathcal I}$) allows us to handle the multi-issue setting by examining each issue in isolation. The reduction is thus complete.}

	Since Arrow's theorem shows that for any problem instance with $N\geq 3$, no deterministic social choice mechanism can satisfy Pareto efficiency, independence of irrelevant alternatives, and non-dictatorship simultaneously, we are able to reduce at least one such hard instance to every instance $(\mathcal I, \mathcal D_{\mathcal I}, \mathcal C = \mathrm{LO}(N)^{\mathcal I})$ of the weak representative setting, and the theorem follows.
\end{proof}

\subsection{Proof of Lemma \ref{lem:transitivity_privilege_appendix}}

\begin{proof} 
	For any extension $C\in \mathrm{LO}(N)^{\mathcal I}$ of $u\succ w$, we need to show that $C'\coloneqq C\odot_i \sigma_{(u,w)}\in\mathcal{C} \implies C\in\mathcal{C}$. Assuming $C'\in\mathcal{C}$, we consider ordering between $u,v,w$ in $C(i)$.

	When $u\succ_C w\succ_C v$, we have $v\succ_{C'} w\succ_{C'} u$. Since $u\succ v$ is privileged, $C''\coloneqq C'\odot_i \sigma_{(u,v)}\in\mathcal C$, and we have $u\succ_{C''} w\succ_{C''} v$. Since $v\succ w$ is privileged, $C'''\coloneqq C''\odot_i \sigma_{(v,w)}\in\mathcal C$. It can be verified that $u\succ_{C'''} v\succ_{C'''} w$ and $C'''=C$.
	The case $v \succ_C u\succ_C w$ is analogous.

	When $u\succ_C v\succ_C w$, we have $w\succ_{C'} v\succ_{C'} u$. Since $v\succ w$ is privileged, $C''\coloneqq C'\odot_i \sigma_{(v,w)}\in\mathcal C$. Now $v\succ_{C''} w\succ_{C''} u$, reducing to the first case.
\end{proof}

\subsection{Proof of Lemma \ref{lem:closure_privilege_appendix}}

\begin{proof}
	For any extension $C\in \mathrm{LO}(N)^{\mathcal I}$ of $o\oplus o'$ and permutation $\sigma\in\mathfrak{S}_{\{c_1,c_2,\cdots,c_{k+m-1}\}}$, we need to show that $C'\coloneqq C\odot_i \sigma\in\mathcal{C} \implies C\in\mathcal{C}$.

	Similar to the proof of Lemma \ref{lem:transitivity_privilege_appendix}, there are two transformations that we are allowed to apply to $C'$ to obtain $C$: first, we can apply some permutation $\sigma_o\in \mathfrak{S}_{\{c_1,c_2,\cdots,c_k\}}$ on $C$ to sort $\{c_1,c_2,\cdots,c_k\}$ in this order if it isn't already sorted; second, we can apply some permutation $\sigma_{o'}\in \mathfrak{S}_{\{c_k,c_{k+1},\cdots,c_{k+m-1}\}}$ on $C$ to sort $\{c_k,c_{k+1},\cdots,c_{k+m-1}\}$ in this order if it isn't already. Since $o$ and $o'$ are privileged, these two transformations preserve membership in $\mathcal{C}$.

	We repeated apply these two transformations in arbitrary order, until neither of them can be applied --- \emph{i.e.}, until both $\{c_1,c_2,\cdots,c_k\}$ and $\{c_k,c_{k+1},\cdots,c_{k+m-1}\}$ are sorted in the correct order. Since $\left( c_1\succ c_2\succ\cdots\succ c_k \right) \land \left( c_k\succ c_{k+1}\succ c_{k+2}\succ\cdots\succ c_{k+m-1} \right) \implies c_1\succ c_2\succ\cdots\succ c_{k+m-1}$, when such a process terminates, we have $C\in\mathcal{C}$.

	We still need to show that the process indeed terminates. We define a \emph{potential function} $\Phi$ such that for any $E\in \mathrm{LO}(N)^{\mathcal I}$, $\Phi(E)=\mathrm{Inv}_{c_1 \succ \cdots\succ c_{k+m-1}}(E(i))=\sum_{1\leq x<y \leq k+m-1}\mathbf{1}_{c_y \succ_{E(i)} c_x}$, the number of inversions in $E(i)$ with respect to the ordering $o\oplus o': c_1 \succ \cdots\succ c_{k+m-1}$.

	We show that the transformation $\odot_i \sigma_o$ strictly decreases $\mathrm{Inv}_{o\oplus o'}$. We first decompose $\odot_i \sigma_o$ into a series of transpositions $\odot_i\sigma_{(c_{j_1},c_{j_2})}$ such that $j_1<j_2$ and $c_{j_2}\succ c_{j_1}$ in the pre-transposition ordering. It can be verified that this transposition removes the inversion $(c_{j_1},c_{j_2})$ without introducing new inversions.
	Therefore the transformation $\odot_i \sigma_o$ strictly decreases $\Phi$, and likewise for $\odot_i \sigma_{o'}$. Since the initial $\Phi(C')$ is finite and $\Phi$ is always non-negative, the process must terminate.
\end{proof}

\subsection{The Field Expansion Lemma}

The Field Expansion Lemma is a key instrument in proving the strong representative impossibility theorem. It shows that the decisiveness of a subpopulation over a specific preference can be expanded to decisiveness over all pairwise orderings among a set of outcomes. We state and prove the lemma in this section.

\begin{definition}[Decisiveness]
	Given any representational mechanism $f$, for a subpopulation ${\mathcal{D}'}_\mathcal{P}$ with probability mass $0<|{\mathcal{D}'}_\mathcal{P}|\leq 1$ in the larger population $\mathcal{D}_{\mathcal P}$, we say that it is \emph{decisive over $c\succ c'$} in issue $i$ if when ${\mathcal{D}'}_\mathcal{P}$ is unanimous on $c\succ c'$, whatever the preference specification $\mathcal{D}_\mathcal{P}$ of the whole population is, $c\succ_{f(\mathcal S)} c'$ with probability $1-e^{\Omega(|\mathcal S|)}$. 
	We say that it is \emph{weakly decisive} if when ${\mathcal{D}'}_\mathcal{P}$ is unanimous on $c\succ c'$ and the rest of the population is unanimous on $c'\succ c$, $c\succ_{f(\mathcal S)} c'$ with probability $1-e^{\Omega(|\mathcal S|)}$.
\end{definition}

\begin{remark}
	Given the guaranteed anonymity of our problem setting, all subpopulations of the same probability mass are entirely interchangeable, and thus the decisiveness of a subpopulation over a specific preference only depends on the subpopulation's mass.
\end{remark}

\begin{lemma}[Field Expansion Lemma]\label{lem:field_expansion_appendix}
	Given any representational mechanism $f$ satisfying PPE and S-PIIA, consider outcomes $u,v,w$ of issue $i\in\mathcal{I}$ such that all $6$ pairwise orderings among them are privileged, and a subpopulation ${\mathcal{D}'}_\mathcal{P}$ that is weakly decisive over $u\succ v$. We then have that ${\mathcal{D}'}_\mathcal{P}$ is decisive over $u\succ w$. 
	
	Note that by analogy, it will also be decisive over $w\succ v$. With repeated application and transitivity, it follows that ${\mathcal{D}'}_\mathcal{P}$ is decisive over all $6$ pairwise orderings among $u,v,w$.
\end{lemma}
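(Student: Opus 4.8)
The plan is to replay the classical ``field expansion'' step from the proof of Arrow's theorem, with two substitutions tailored to the representational setting: the role of ``Pareto forces a social ranking'' is played by PPE together with the privileged-ness of the relevant pair, and the role of ``IIA lets us ignore irrelevant coordinates'' is played by S-PIIA --- which here carries a second burden as well, namely passing from one engineered witness population to \emph{every} population on which ${\mathcal{D}'}_\mathcal{P}$ is unanimous on $u\succ w$, as required by ``decisive'' (as opposed to merely ``weakly decisive'').

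First I would fix notation: let $p\coloneqq|{\mathcal{D}'}_\mathcal{P}|$, let $\mathcal{D}_\mathcal{P}$ be an \emph{arbitrary} population on which ${\mathcal{D}'}_\mathcal{P}$ is unanimous on $u\succ w$ in issue $i$, and put $q\coloneqq\mathcal{M}(i)\mid_{\mathrm{LO}(\{u,w\})}(u\succ w)\in[p,1]$. I would then build a witness population $\mathcal{D}^{*}_\mathcal{P}$ that coincides with $\mathcal{D}_\mathcal{P}$ on all preferences over issues $\neq i$, and on issue $i$ is given by: every member of ${\mathcal{D}'}_\mathcal{P}$ ranks $u\succ v\succ w$ on the triple $\{u,v,w\}$, a fraction $(q-p)/(1-p)$ of the complement ranks $v\succ u\succ w$, and the rest of the complement ranks $v\succ w\succ u$. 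Then (a) everyone ranks $v\succ w$, (b) ${\mathcal{D}'}_\mathcal{P}$ is unanimous on $u\succ v$ while its complement is unanimous on $v\succ u$, and (c) the $\{u,w\}$-marginal of $\mathcal{D}^{*}_\mathcal{P}$ in issue $i$ is $q$, the same as for $\mathcal{D}_\mathcal{P}$.

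Next I would chain three facts about $f(\mathcal{S}^{*})$, each holding with probability $1-e^{\Omega(|\mathcal S|)}$, then union-bound. By (a) and the privileged-ness of $v\succ w$ in $i$: any candidate profile ranking $w\succ v$ on issue $i$ has a $\sigma_{(v,w)}$-swap counterpart in $\mathcal C$ (apply the privileged-ordering definition with $\sigma=\sigma_{(v,w)}$), so PPE forbids that profile as an output, giving $v\succ_{f(\mathcal{S}^{*})}w$. By (b), weak decisiveness of ${\mathcal{D}'}_\mathcal{P}$ over $u\succ v$ applies to $\mathcal{D}^{*}_\mathcal{P}$, giving $u\succ_{f(\mathcal{S}^{*})}v$. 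Since $f(\mathcal{S}^{*})(i)$ is a linear order, transitivity yields $u\succ_{f(\mathcal{S}^{*})}w$. Finally, since $\mathcal{D}^{*}_\mathcal{P}$ and $\mathcal{D}_\mathcal{P}$ share the $\{u,w\}$-marginal in issue $i$ (and $u\succ w$, $w\succ u$ are both privileged), S-PIIA gives $f(\mathcal S)\mid_{\mathrm{LO}(\{u,w\})}=f(\mathcal{S}^{*})\mid_{\mathrm{LO}(\{u,w\})}$ with probability $1-e^{\Omega(|\mathcal S|)}$, hence $u\succ_{f(\mathcal S)}w$ with probability $1-e^{\Omega(|\mathcal S|)}$. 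As $\mathcal{D}_\mathcal{P}$ was arbitrary, ${\mathcal{D}'}_\mathcal{P}$ is decisive over $u\succ w$. The claimed decisiveness over $w\succ v$ follows from the mirror construction (everyone ranks $w\succ u$; ${\mathcal{D}'}_\mathcal{P}$ ranks $u\succ v$ and the complement ranks $v\succ u$, leaving the complement's ranking of $\{v,w\}$ free), and decisiveness over the remaining four orderings follows by iterating --- a decisive subpopulation is a fortiori weakly decisive, so the same step may be re-run with a fresh ``third outcome'' --- again closing up under transitivity of the output ordering.

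The step I expect to be the main obstacle is extracting the genuine social comparison $v\succ_{f(\mathcal{S}^{*})}w$ from PPE. PPE only rules out each individual wrong output profile separately, with an $e^{\Omega(|\mathcal S|)}$ rate the axiom lets depend on the profile; so when $\mathcal I$ (and hence $\mathcal C$) is infinite one must argue that this rate can be taken uniform over all profiles that rank $w$ above $v$ on issue $i$, or else first reduce everything to the at most $N!$ issue-$i$ restrictions before invoking PPE. The one structural point that is easy to get wrong is the placement of $v$: it must lie above \emph{both} $u$ and $w$ in every individual's ordering, because it is exactly the freedom this leaves in how the complement ranks $\{u,w\}$ that upgrades the classical ``weakly decisive'' conclusion to full decisiveness once S-PIIA is applied.
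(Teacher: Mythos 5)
Your proposal is correct and follows essentially the same route as the paper's proof: engineer a population in which ${\mathcal{D}'}_\mathcal{P}$ ranks $u\succ v\succ w$ and the complement places $v$ on top, obtain $u\succ_{f}v$ from weak decisiveness and $v\succ_{f}w$ from PPE via the privilegedness of $v\succ w$, conclude $u\succ_{f}w$ by transitivity, and transfer this to an arbitrary population with the same $\{u,w\}$-marginal by S-PIIA. If anything, your write-up is more explicit than the paper's (the $(q-p)/(1-p)$ marginal-matching construction and the uniformity-over-profiles caveat in the PPE step are left implicit there), but the underlying argument is the same.
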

\begin{proof}
	Assume that ${\mathcal{D}'}_\mathcal{P}$ is unanimous on $u\succ w$. By S-PIIA, we can assume that ${\mathcal{D}'}_\mathcal{P}$ is further unanimous on $u\succ v\succ w$, while the rest of the population is unanimous on $v\succ u$ and $v\succ w$. This does not affect $f(\mathcal S)\mid_{\mathrm{LO}(\{u,w\})}$.
	
	By weak decisiveness, we have $u\succ_{f(\mathcal S)} v$ with probability $1-e^{\Omega(|\mathcal S|)}$. Due to the privilegedness of $v\succ w$, PPE can be applied to every extension of the opposite ordering $w\succ v$ (where the extension serve as $C'$), guaranteeing that $v\succ_{f(\mathcal S)} w$ with probability $1-e^{\Omega(|\mathcal S|)}$. Taken together, we have $u\succ_{f(\mathcal S)} w$ with probability $1-e^{\Omega(|\mathcal S|)}$, and thus ${\mathcal{D}'}_\mathcal{P}$ is decisive over $u\succ w$. The rest follows.
\end{proof}

Finally, we review a few graph theoretical concepts that will prove useful in the proof of the strong representative impossibility theorem.

\begin{definition}[Strongly Connected Component]
	For a directed graph $G$, a \emph{strongly connected component} (SCC) is a maximal subgraph of $G$ (including those of size $1$) in which there is a directed path between every ordered pair of vertices. The SCCs of $G$ form a partition of the vertex set of $G$.
\end{definition}

\begin{definition}[Condensation]
	The \emph{condensation} $\mathrm{cond}(G)$ of a directed graph $G$ is a directed acyclic graph (DAG) where each vertex represents an SCC of $G$, and there is an edge from SCC $A$ to SCC $B$ iff there is an edge from a vertex in $A$ to a vertex in $B$ in $G$.
\end{definition}

\begin{remark}
	Intuitively, SCC is the unit of bidirectional connectivity in a directed graph, and condensation is a way to simplify a directed graph into a DAG by contracting SCCs into single vertices. There is a directed path from $u$ to $v$ in $G$ iff there is a directed path from the SCC containing $u$ to the SCC containing $v$ in $\mathrm{cond}(G)$. Both SCC and condensation can be computed in linear time in the number of vertices and edges of the graph, using algorithms such as Tarjan's algorithm and Kosaraju's algorithm.
  
  Note that given the transitivity of the privilege graph, any SCC in the privilege graph is a complete digraph.
\end{remark}

\subsection{Proof of Theorem \ref{theorem:strong_rep_impossibility_appendix}}

\begin{proof}
	We first show that issues with cyclic privilege graphs imply the impossibility of a mechanism satisfying the axioms. We assume the existence of such an $f$ and show that it leads to a contradiction.
	
	For any cyclic privileged issue $i$, by Lemma \ref{lem:transitivity_privilege_appendix}, we know that any directed simple cycle in $G_i$ induces a complete digraph on the outcomes in the cycle. We arbitrarily pick outcomes $u,v,w$ from the cycle. Arbitrarily partition the population into equal portions ${\mathcal{D}^\mathrm{a}}_\mathcal{P}$ and ${\mathcal{D}^\mathrm{b}}_\mathcal{P}$ of probablity mass $\frac 23$ and $\frac 13$ respectively. Let ${\mathcal{D}^\mathrm{a}}_\mathcal{P}$ be unanimous on $u\succ v$ and ${\mathcal{D}^\mathrm{b}}_\mathcal{P}$ be unanimous on $v\succ u$. By S-PC, the representational mechanism $f$ must converge upon $u\succ v$ or $v\succ u$ with probability $1-e^{\Omega(|\mathcal S|)}$ in the full population. 

	Assume that $f$ converges upon $u\succ v$, in which case ${\mathcal{D}^\mathrm{a}}_\mathcal{P}$ is weakly decisive over $u\succ v$. By the field expansion lemma, ${\mathcal{D}^\mathrm{a}}_\mathcal{P}$ is decisive over $u\succ v$. We further partition ${\mathcal{D}^\mathrm{a}}_\mathcal{P}$ into $1:2$ portions ${\mathcal{D}^\mathrm{a,1}}_\mathcal{P}$ and ${\mathcal{D}^\mathrm{a,2}}_\mathcal{P}$. Then, we let:
	\vspace{-0.5em}
	\begin{itemize}[leftmargin=2em]
		\item ${\mathcal{D}^\mathrm{a,1}}_\mathcal{P}$ (probability mass $\frac 29$) be unanimous on $u\succ v\succ w$,
		\item ${\mathcal{D}^\mathrm{a,2}}_\mathcal{P}$ (probability mass $\frac 49$) be unanimous on $w\succ u\succ v$, and
		\item ${\mathcal{D}^\mathrm{b}}_\mathcal{P}\ \;$ (probability mass $\frac 13$) be unanimous on $v\succ w\succ u$.
	\end{itemize}
	\vspace{-0.5em}
	Since ${\mathcal{D}^\mathrm{a}}_\mathcal{P}$ is decisive, we have $u \succ_{f(\mathcal S)} v$ with probability $1-e^{\Omega(|\mathcal S|)}$. Since the probability masses of all subpopulations have denominators $3$ or $9$ (both odd numbers), there can be no ties, and $f$ must converge due to S-PC. Given that $u \succ_{f(\mathcal S)} v$, with probability $1-e^{\Omega(|\mathcal S|)}$, we have either $u \succ_{f(\mathcal S)} w$ or $w \succ_{f(\mathcal S)} v$. In the former case, ${\mathcal{D}^\mathrm{a,1}}_\mathcal{P}$ is weakly decisive over $u\succ w$, and in the latter case, ${\mathcal{D}^\mathrm{a,2}}_\mathcal{P}$ is weakly decisive over $w\succ v$. In either case, the subpopulation is decisive by the field expansion lemma, but its probability mass ($\frac 29$ or $\frac 49$) is smaller than $0.5$, contradicting PND (Lemma \ref{lem:pnd_all_mechanisms_appendix}).

	We can similarly show that $f$ converging upon $v\succ u$ leads to a contradiction. Therefore, no such $f$ can exist. Note that in the proof above, we have been misusing $|\mathcal S|$ to denote the number of samples that falls into issue $i$, which is not a problem since $i$'s probability mass $\mathrm{Pr}_{\mathcal D_{\mathcal I}}[i]>0$ is a positive constant (we can pick the $i$ with the largest probability mass, which makes the mass only dependent on $\mathcal D_{\mathcal I}$ itself), and the asymptotic behavior of the convergence probability is the same.

	Then, assuming that all issues (except possibly those with zero probability in $\mathcal D_{\mathcal I}$) have specific non-cyclic privilege graphs, we construct a candidate space $\mathcal C$ consistent with the privilege graphs, and a mechanism $f$ that satisfies the axioms for all $\mathcal D_{\mathcal P}$.

	First consider the case where $|\mathcal I|=1, {\mathcal I}=\{i\}$. Since $\phi(i)$ is transitive (Lemma \ref{lem:transitivity_privilege_appendix}) but doesn't have simple cycles of length at least $3$, it follows that $\phi(i)$ doesn't have any SCC containing 3 or more vertices. Fix an arbitrary topological order $(g_1,\cdots,g_m)$ of vertices in the DAG $\mathrm{cond}(\phi(i))$, where $\{g_j\}_{j=1}^m$ constitute a partition of $[N]$, and $|\{g_j\}|\in\{1,2\}$ for each $j$; let $g_{k_1}, \cdots, g_{k_l}$ be the SCCs of size $2$. Now consider translating $(g_1,\cdots,g_m)$ into an ordering in $\mathrm{LO}(N)$, where the vertices in each SCC are ordered arbitrarily, and vertices in different SCCs are ordered according to the topological order $(g_1,\cdots,g_m)$. There are $2^l$ ways to perform such translation, and we define $\mathcal C\subset \mathrm{LO}(N)$ to be the set of these $2^l$ orderings. It can be verified that the $G_i$ resulting from $\mathcal C$ contains $\phi(i)$ as a subgraph.

	We then define the mechanism $f$. For each $g_{k_j} = \{u_j,v_j\}\ (1\leq j\leq l)$, it examines if a majority in the sampels prefer $u_j$ over $v_j$. If yes, it outputs the ordering that places $u_j$ above $v_j$; otherwise, it outputs the ordering that places $v_j$ above $u_j$. Ties are broken arbitrarily. Let us now verify that $f$ satisfies PPE, S-PIIA, and S-PC for all $\mathcal D_{\mathcal P}$.
	\begin{itemize}[leftmargin=2em]
		\item PPE: Relative orderings between outcomes in different SCCs are fixed by the topological order, and therefore the outcomes $c,c'$ in the PPE axiom must be in the same SCC. $f$ resolves this preference using a majority vote, which, by Hoeffding's inequality, has a convergence probability of $1-e^{\Omega(|\mathcal S|)}$.
		\item S-PIIA: It can be verified that if $c\succ c'$ and $c'\succ c$ are both privileged orderings in $i$, then $c,c'$ must be in the same SCC. $f$ resolves this preference using a majority vote that considers only the preference between $c$ and $c'$, and thus satisfies S-PIIA.
		\item S-PC: Again, $c,c'$ must be in the same SCC. $f$ resolves this preference using a majority vote, which, by Hoeffding's inequality, has a convergence probability of $1-e^{\Omega(|\mathcal S|)}$ when the population is non-uniform on $c,c'$.
	\end{itemize}
	
	In the general case where $|\mathcal I|>1$, we can apply the above construction to each issue independently, and define $\mathcal C$ to be the Cartesian product of the constructed candidate spaces. The mechanism $f$ is then defined to apply the constructed mechanism for each issue independently. It can be verified that $f$ satisfies PPE, S-PIIA, and S-PC for all $\mathcal D_{\mathcal P}$.
\end{proof}
\end{document}